\def\input@path{{F:/important/doingWork/myWorks/AAAI2015//}}
\definecolor{page_backgroundcolor}{rgb}{1, 1, 1}
\providecommand{\tabularnewline}{\\}
\providecommand{\algorithmname}{Algorithm}
\newtheorem{theorem}{Theorem}  
\newtheorem{corollary}[theorem]{Corollary}     
\newtheorem{remark}[theorem]{Remark}     
\begin{document}
\global\long\def\mtbfA{\mathbf{A}}
 \global\long\def\mtbfa{\mathbf{a}}
 \global\long\def\mebfA{\bar{\mtbfA}}
 \global\long\def\mebfa{\bar{\mtbfa}}

\global\long\def\mhbfA{\widehat{\mathbf{A}}}
 \global\long\def\mhbfa{\widehat{\mathbf{a}}}
 \global\long\def\mtcalA{\mathcal{A}}

\global\long\def\mtbfB{\mathbf{B}}
 \global\long\def\mtbfb{\mathbf{b}}
 \global\long\def\mebfB{\bar{\mtbfB}}
 \global\long\def\mebfb{\bar{\mtbfb}}

\global\long\def\mhbfB{\widehat{\mathbf{B}}}
 \global\long\def\mhbfb{\widehat{\mathbf{b}}}
 \global\long\def\mtcalB{\mathcal{B}}

\global\long\def\mtbfC{\mathbf{C}}
 \global\long\def\mtbfc{\mathbf{c}}
 \global\long\def\mebfC{\bar{\mtbfC}}
 \global\long\def\mebfc{\bar{\mtbfc}}

\global\long\def\mhbfC{\widehat{\mathbf{C}}}
 \global\long\def\mhbfc{\widehat{\mathbf{c}}}
 \global\long\def\mtcalC{\mathcal{C}}
 \global\long\def\mtbbC{\mathbb{C}}

\global\long\def\mtbfD{\mathbf{D}}
 \global\long\def\mtbfd{\mathbf{d}}
 \global\long\def\mebfD{\bar{\mtbfD}}
 \global\long\def\mebfd{\bar{\mtbfd}}

\global\long\def\mhbfD{\widehat{\mathbf{D}}}
 \global\long\def\mhbfd{\widehat{\mathbf{d}}}
 \global\long\def\mtcalD{\mathcal{D}}

\global\long\def\mtbfE{\mathbf{E}}
 \global\long\def\mtbfe{\mathbf{e}}
 \global\long\def\mebfE{\bar{\mtbfE}}
 \global\long\def\mebfe{\bar{\mtbfe}}

\global\long\def\mhbfE{\widehat{\mathbf{E}}}
 \global\long\def\mhbfe{\widehat{\mathbf{e}}}
 \global\long\def\mtcalE{\mathcal{E}}
 \global\long\def\mtexpect{\mathbb{E}}

\global\long\def\mtbfF{\mathbf{F}}
 \global\long\def\mtbff{\mathbf{f}}
 \global\long\def\mebfF{\bar{\mathbf{F}}}
 \global\long\def\mebff{\bar{\mathbf{f}}}

\global\long\def\mhbfF{\widehat{\mathbf{F}}}
 \global\long\def\mhbff{\widehat{\mathbf{f}}}
 \global\long\def\mtcalF{\mathcal{F}}

\global\long\def\mtbfG{\mathbf{G}}
 \global\long\def\mtbfg{\mathbf{g}}
 \global\long\def\mebfG{\bar{\mathbf{G}}}
 \global\long\def\mebfg{\bar{\mathbf{g}}}

\global\long\def\mhbfG{\widehat{\mathbf{G}}}
 \global\long\def\mhbfg{\widehat{\mathbf{g}}}
 \global\long\def\mtcalG{\mathcal{G}}

\global\long\def\mtbfH{\mathbf{H}}
 \global\long\def\mtbfh{\mathbf{h}}
 \global\long\def\mebfH{\bar{\mathbf{H}}}
 \global\long\def\mebfh{\bar{\mathbf{h}}}

\global\long\def\mhbfH{\widehat{\mathbf{H}}}
 \global\long\def\mhbfh{\widehat{\mathbf{h}}}
 \global\long\def\mtcalH{\mathcal{H}}

\global\long\def\mtbfI{\mathbf{I}}
 \global\long\def\mtbfi{\mathbf{i}}
 \global\long\def\mebfI{\bar{\mathbf{I}}}
 \global\long\def\mebfi{\bar{\mathbf{i}}}

\global\long\def\mhbfI{\widehat{\mathbf{I}}}
 \global\long\def\mhbfi{\widehat{\mathbf{i}}}
 \global\long\def\mtcalI{\mathcal{I}}

\global\long\def\mtbfJ{\mathbf{J}}
 \global\long\def\mtbfj{\mathbf{j}}
 \global\long\def\mebfJ{\bar{\mathbf{J}}}
 \global\long\def\mebfj{\bar{\mathbf{j}}}

\global\long\def\mhbfJ{\widehat{\mathbf{J}}}
 \global\long\def\mhbfj{\widehat{\mathbf{j}}}
 \global\long\def\mtcalJ{\mathcal{J}}

\global\long\def\mtbfK{\mathbf{K}}
 \global\long\def\mtbfk{\mathbf{k}}
 \global\long\def\mebfK{\bar{\mathbf{K}}}
 \global\long\def\mebfk{\bar{\mathbf{k}}}

\global\long\def\mhbfK{\widehat{\mathbf{K}}}
 \global\long\def\mhbfk{\widehat{\mathbf{k}}}
 \global\long\def\mtcalK{\mathcal{K}}

\global\long\def\mtbfL{\mathbf{L}}
 \global\long\def\mtbfl{\mathbf{l}}
 \global\long\def\mebfL{\bar{\mathbf{L}}}
 \global\long\def\mebfl{\bar{\mathbf{l}}}

\global\long\def\mhbfL{\widehat{\mathbf{K}}}
 \global\long\def\mhbfl{\widehat{\mathbf{k}}}
 \global\long\def\mtcalL{\mathcal{L}}

\global\long\def\mtbfM{\mathbf{M}}
 \global\long\def\mtbfm{\mathbf{m}}
 \global\long\def\mebfM{\bar{\mathbf{M}}}
 \global\long\def\mebfm{\bar{\mathbf{m}}}

\global\long\def\mhbfM{\widehat{\mathbf{M}}}
 \global\long\def\mhbfm{\widehat{\mathbf{m}}}
 \global\long\def\mtcalM{\mathcal{M}}

\global\long\def\mtbfN{\mathbf{N}}
 \global\long\def\mtbfn{\mathbf{n}}
 \global\long\def\mebfN{\bar{\mathbf{N}}}
 \global\long\def\mebfn{\bar{\mathbf{n}}}

\global\long\def\mhbfN{\widehat{\mathbf{N}}}
 \global\long\def\mhbfn{\widehat{\mathbf{n}}}
 \global\long\def\mtcalN{\mathcal{N}}

\global\long\def\mtbfO{\mathbf{O}}
 \global\long\def\mtbfo{\mathbf{o}}
 \global\long\def\mebfO{\bar{\mathbf{O}}}
 \global\long\def\mebfo{\bar{\mathbf{o}}}

\global\long\def\mhbfO{\widehat{\mathbf{O}}}
 \global\long\def\mhbfo{\widehat{\mathbf{o}}}
 \global\long\def\mtcalO{\mathcal{O}}

\global\long\def\mtbfP{\mathbf{P}}
 \global\long\def\mtbfp{\mathbf{p}}
 \global\long\def\mebfP{\bar{\mathbf{P}}}
 \global\long\def\mebfp{\bar{\mathbf{p}}}

\global\long\def\mhbfP{\widehat{\mathbf{P}}}
 \global\long\def\mhbfp{\widehat{\mathbf{p}}}
 \global\long\def\mtcalP{\mathcal{P}}

\global\long\def\mtbfQ{\mathbf{Q}}
 \global\long\def\mtbfq{\mathbf{q}}
 \global\long\def\mebfQ{\bar{\mathbf{Q}}}
 \global\long\def\mebfq{\bar{\mathbf{q}}}

\global\long\def\mhbfQ{\widehat{\mathbf{Q}}}
 \global\long\def\mhbfq{\widehat{\mathbf{q}}}
\global\long\def\mtcalQ{\mathcal{Q}}

\global\long\def\mtbfR{\mathbf{R}}
 \global\long\def\mtbfr{\mathbf{r}}
 \global\long\def\mebfR{\bar{\mathbf{R}}}
 \global\long\def\mebfr{\bar{\mathbf{r}}}

\global\long\def\mhbfR{\widehat{\mathbf{R}}}
 \global\long\def\mhbfr{\widehat{\mathbf{r}}}
\global\long\def\mtcalR{\mathcal{R}}
 \global\long\def\mtbbR{\mathbb{R}}

\global\long\def\mtbfS{\mathbf{S}}
 \global\long\def\mtbfs{\mathbf{s}}
 \global\long\def\mebfS{\bar{\mathbf{S}}}
 \global\long\def\mebfs{\bar{\mathbf{s}}}

\global\long\def\mhbfS{\widehat{\mathbf{S}}}
 \global\long\def\mhbfs{\widehat{\mathbf{s}}}
\global\long\def\mtcalS{\mathcal{S}}

\global\long\def\mtbfT{\mathbf{T}}
 \global\long\def\mtbft{\mathbf{t}}
 \global\long\def\mebfT{\bar{\mathbf{T}}}
 \global\long\def\mebft{\bar{\mathbf{t}}}

\global\long\def\mhbfT{\widehat{\mathbf{T}}}
 \global\long\def\mhbft{\widehat{\mathbf{t}}}
 \global\long\def\mtcalT{\mathcal{T}}

\global\long\def\mtbfU{\mathbf{U}}
 \global\long\def\mtbfu{\mathbf{u}}
 \global\long\def\mebfU{\bar{\mathbf{U}}}
 \global\long\def\mebfu{\bar{\mathbf{u}}}

\global\long\def\mhbfU{\widehat{\mathbf{U}}}
 \global\long\def\mhbfu{\widehat{\mathbf{u}}}
 \global\long\def\mtcalU{\mathcal{U}}

\global\long\def\mtbfV{\mathbf{V}}
 \global\long\def\mtbfv{\mathbf{v}}
 \global\long\def\mebfV{\bar{\mathbf{V}}}
 \global\long\def\mebfv{\bar{\mathbf{v}}}

\global\long\def\mhbfV{\widehat{\mathbf{V}}}
 \global\long\def\mhbfv{\widehat{\mathbf{v}}}
\global\long\def\mtcalV{\mathcal{V}}

\global\long\def\mtbfW{\mathbf{W}}
 \global\long\def\mtbfw{\mathbf{w}}
 \global\long\def\mebfW{\bar{\mathbf{W}}}
 \global\long\def\mebfw{\bar{\mathbf{w}}}

\global\long\def\mhbfW{\widehat{\mathbf{W}}}
 \global\long\def\mhbfw{\widehat{\mathbf{w}}}
 \global\long\def\mtcalW{\mathcal{W}}

\global\long\def\mtbfX{\mathbf{X}}
 \global\long\def\mtbfx{\mathbf{x}}
 \global\long\def\mebfX{\bar{\mtbfX}}
 \global\long\def\mebfx{\bar{\mtbfx}}

\global\long\def\mhbfX{\widehat{\mathbf{X}}}
 \global\long\def\mhbfx{\widehat{\mathbf{x}}}
 \global\long\def\mtcalX{\mathcal{X}}

\global\long\def\mtbfY{\mathbf{Y}}
 \global\long\def\mtbfy{\mathbf{y}}
\global\long\def\mebfY{\bar{\mathbf{Y}}}
 \global\long\def\mebfy{\bar{\mathbf{y}}}

\global\long\def\mhbfY{\widehat{\mathbf{Y}}}
 \global\long\def\mhbfy{\widehat{\mathbf{y}}}
 \global\long\def\mtcalY{\mathcal{Y}}

\global\long\def\mtbfZ{\mathbf{Z}}
 \global\long\def\mtbfz{\mathbf{z}}
 \global\long\def\mebfZ{\bar{\mathbf{Z}}}
 \global\long\def\mebfz{\bar{\mathbf{z}}}

\global\long\def\mhbfZ{\widehat{\mathbf{Z}}}
 \global\long\def\mhbfz{\widehat{\mathbf{z}}}
\global\long\def\mtcalZ{\mathcal{Z}}

\global\long\def\mtvar{\mathbf{\text{Var}}}

\global\long\def\mtth{\text{th}}

\global\long\def\mtbfzero{\mathbf{0}}
 \global\long\def\mtbfone{\mathbf{1}}

\global\long\def\mttrace{\text{Tr}}

\global\long\def\mttotalVariation{\text{TV}}

\global\long\def\mtdet{\text{Det}}

\global\long\def\mtvec{\mathbf{\text{vec}}}

\global\long\def\mtvar{\mathbf{\text{var}}}

\global\long\def\mtcov{\mathbf{\text{cov}}}

\global\long\def\mtsubTo{\mathbf{\text{s.t.}}}

\global\long\def\mtfor{\text{for}}

\global\long\def\mtrank{\text{rank}}

\global\long\def\mtdiag{\mathbf{\text{diag}}}

\global\long\def\mtsign{\mathbf{\text{sign}}}

\global\long\def\mtloss{\mathbf{\text{loss}}}

\global\long\def\mtwhen{\text{when}}

\global\long\def\mtexpect{\mathbb{E}}

\global\long\def\mtcalN{\mathcal{N}}

\global\long\def\mtbbR{\mathbb{R}}

\title{10,000+ Times Accelerated Robust Subset Selection (ARSS)}

\author{{\normalsize{Feiyun Zhu, Bin Fan, Xinliang Zhu, Ying Wang, Shiming
Xiang and Chunhong Pan}}\\
Institute of Automation, Chinese Academy of Sciences\\
\{fyzhu, bfan, ywang, smxiang and chpan\}@nlpr.ia.ac.cn, zhuxinliang2012@ia.ac.cn}
\maketitle
\begin{abstract}
Subset selection from massive data with noised information is increasingly
popular for various applications. This problem is still highly challenging
as current methods are generally slow in speed and sensitive to outliers.
To address the above two issues, we propose an accelerated robust
subset selection (ARSS) method. Specifically in the subset selection
area, this is the first attempt to employ the $\ell_{p}\left(0<p\leq1\right)$-norm
based measure for the representation loss, preventing large errors
from dominating our objective. As a result, the robustness against
outlier elements is greatly enhanced. Actually, data size is generally
much larger than feature length, i.e. $N\!\gg\! L$. Based on this
observation, we propose a speedup solver (via ALM and equivalent derivations)
to\textbf{ }highly reduce the computational cost, theoretically from
$O\left(N^{4}\right)$ to $O\left(N{}^{2}L\right)$. Extensive experiments
on ten benchmark datasets verify that our method not only outperforms
state of the art methods, but also runs 10,000+ times faster than
the most related method.
\end{abstract}

\section{Introduction}

Due to the explosive growth of data\ \cite{junWang_2012_PAMI_SemiSupHash},
subset selection methods are increasingly popular for a wide range
of machine learning and computer vision applications\ \cite{Frey_2007_Science_ClusteringViaPassingMessages,Jenatton_2011_JMLR_SelectionViaSparsityInducingNorms}.
This kind of methods offer the potential to select a few highly representative
samples or exemplars to describe the entire dataset. By analyzing
a few, we can roughly know all. Such case is very important to summarize
and visualize huge datasets of texts, images and videos etc.\ \cite{Bien_2012_AAS_prototypeSelection_4_Interpret,Elhamifar_2012_NIPS_ExemplarsbyPairwise}.
Besides, by only using the selected exemplars for succeeding tasks,
the cost of memories and computational time will be greatly reduced\ \cite{Garcia_2012_PAMI_PrototypeSelection_4_KNN}.
Additionally, as outliers are generally less representative, the side
effect of outliers will be reduced, thus boosting the performance
of subsequent applications\ \cite{Elhamifar_2012_CVPR_SeeAllLookFew}.
\begin{figure}[t]
\centering{}\includegraphics[width=1\columnwidth]{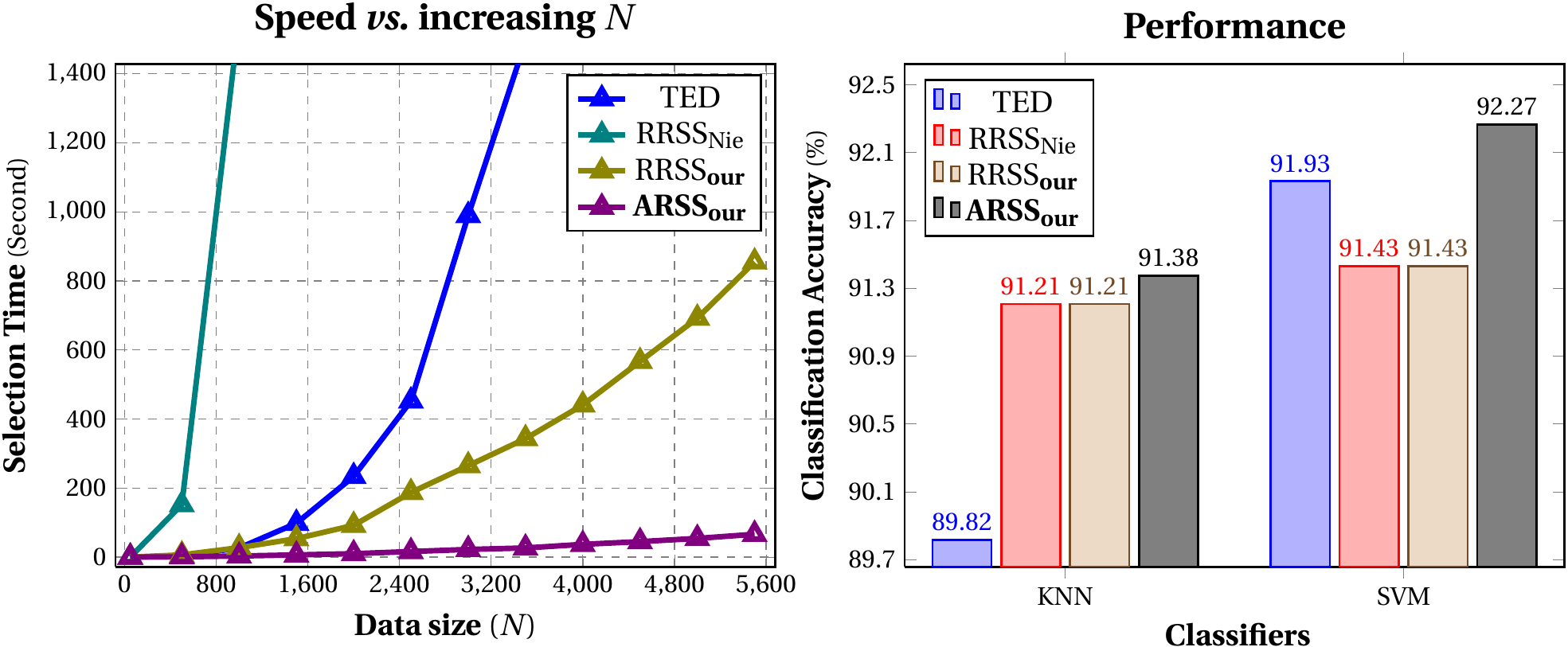}
\caption{{\footnotesize{Comparisons of four algorithms on Optdigit. Two conclusions
can be drawn. First, our method (ARSS$_{\text{our}}$) is highly faster
than all others; with the help of an elegant new theorem, RRSS$_{\text{our}}$
is significantly faster than the authorial algorithm RRSS$_{\text{Nie}}$.
Second, ARSS$_{\text{our}}$ achieves highly promising prediction
accuracies. }}{\small{\label{fig:compare_4_methods_Optdigit}}}}
\end{figure}

There have been several subset selection methods. The most intuitional
method is to randomly select a fixed number of samples. Although highly
efficient, there is no guarantee for an effective selection. For the
other methods, depending on the mechanism of representative exemplars,
there are mainly three categories of selection methods. One category
relies on the assumption that the data points lie in one or multiple
low-dimensional subspaces. Specifically, the Rank Revealing QR (RRQR)\ \cite{Chan_1987_LAIA_RRQR,Boutsidis_2009_SODA_RRQR_2}
selects the subsets that give the best conditional sub-matrix. Unfortunately,
this method has suboptimal properties, as it is not assured to find
the globally optimum in polynomial time. 

Another category assumes that the samples are distributed around centers\ \cite{Frey_2007_Science_ClusteringViaPassingMessages,zcLin_2010_ICML_RobstSubspaceSegmentation}.
The center or its nearest neighbour are selected as exemplars. Perhaps,
Kmeans and Kmedoids are the most typical methods (Kmedoids is a variant
of Kmeans). Both methods employ an EM-like algorithm. Thus, the results
depend tightly on the initialization, and they are highly unstable
for large $K$ (i.e. the number of centers or selected samples). 

Recently, there are a few methods that assume exemplars are the samples
that can best represent the whole dataset. However, for\ \cite{kaiYu_2006_ICML_activeLearning},
the optimization is a combinatorial problem (NP-hard)\ \cite{FpNie_2013_IJCAI_EarlyActiveLearning,kaiYu_2008_SIGIR_NonGreedy_TED},
which is computationally intractable to solve. Besides, the representation
loss is measured by the least square measure, which is sensitive to
outliers in data\ \cite{huaWang_2014_ICML_RobustMetricLearning,fyzhu_2014_JSTSP_RRLbS,FpNie_2013_IJCAI_EarlyActiveLearning}.

Then\ \cite{FpNie_2013_IJCAI_EarlyActiveLearning} improves\ \cite{kaiYu_2006_ICML_activeLearning}
by employing a robust loss via the $\ell_{2,1}$-norm; the $\ell_{1}$-norm
is applied to samples, and the $\ell_{2}$-norm is used for features.
In this way, the side effect of outlier samples is relieved. The solver
of\ \cite{FpNie_2013_IJCAI_EarlyActiveLearning} is theoretically
perfect due to its ability of convergence to global optima. Unfortunately,
in terms of computational costs, the solver is highly complex. It
takes $O\left(N{}^{4}\right)$ for one iteration as shown in Table\ \ref{tab:complexity_3_methods}.
This is infeasible for the case of large $N$ (e.g. it takes 2000+
hours for a case of $N=13000$). Moreover, the representation loss
is only robust against outlier samples. Such case is worth improvement,
as there may exist outlier elements in real data.

\paragraph{Contributions. }

In this paper, we propose an accelerated robust subset selection method
to highly raise the speed on the one hand, and to boost the robustness
on the other. To this end, we use the $\ell_{p}\left(0\!<\! p\!\leq\!1\right)$-norm
based robust measure for the representation loss, preventing large
errors from dominating our objective. As a result, the robustness
against outliers is greatly boosted. Then, based on the observation
that data size is generally much larger than feature length, i.e.
$N\!\gg\! L$, we propose a speedup solver. The main acceleration
is owing to the Augmented Lagrange Multiplier (ALM) and an equivalent
derivation. Via them, we reduce the computational complexity from
$O\left(N{}^{4}\right)$ to $O\left(N{}^{2}L\right)$. Extensive results
on ten benchmark datasets demonstrate that in average, our method
is 10,000+ times faster than Nie's method. The selection quality is
highly encouraging as shown in Fig.\ \ref{fig:compare_4_methods_Optdigit}.
Additionally, via another equivalent derivation, we give an accelerated
solver for Nie's method, theoretically reducing the computational
complexity from $O\left(N{}^{4}\right)$ to $O\left(N{}^{2}L+NL{}^{3}\right)$
as listed in Table\,\ref{tab:complexity_3_methods}, empirically
obtaining a 500+ times speedup compared with the authorial solver.

\paragraph{Notations. }

We use boldface uppercase letters to denote matrices and boldface
lowercase letters to represent vectors. For a matrix $\mtbfY=\left[Y_{ln}\right]\in\mtbbR^{L\times N}$,
we denote its $l^{\mtth}$ row and $n^{\mtth}$ column as $\mtbfy^{l}$
and $\mtbfy_{n}$ respectively. The $\ell_{2,1}$-norm of a matrix
is defined as $\left\Vert \mtbfY\right\Vert _{2,1}\!=\!\sum_{l}^{L}\sqrt{\sum_{n}^{N}Y_{ln}^{2}}=\sum_{l}^{L}\left\Vert \mtbfy^{l}\right\Vert _{2}$.
The $\ell_{p}\left(0<p\leq1\right)$-norm of a matrix is defined as
$\left\Vert \mtbfY\right\Vert _{p}\!=\!\left(\sum_{n}^{N}\sum_{l}^{L}\left|Y_{ln}\right|^{p}\right)^{\frac{1}{p}}$;
thus, we have $\left\Vert \mtbfY\right\Vert _{p}^{p}=\sum_{l,n}\left|Y_{ln}\right|^{p}$.

\section{Subset Selection via Self-Representation}

In the problem of subset selection, we are often given a set of $N$
unlabelled points $\mtbfX\!=\!\left\{ \mtbfx_{1},\mtbfx_{2},\!\cdots\!,\mtbfx_{N}\!\mid\!\mtbfx_{n}\!\in\!\mtbbR^{L}\right\} $,
where $L$ is the feature length. The goal is to select the top $K\left(K\!\ll\! N\right)$
most representative and informative samples (i.e. exemplars) to effectively
describe the entire dataset $\mtbfX$. By solely using these $K$
exemplars for subsequent tasks, we could greatly reduce the computational
costs and largely alleviate the side effects of outlier elements in
data. Such a motivation could be formulated as the Transductive Experimental
Design (TED) model\,\cite{kaiYu_2006_ICML_activeLearning}: 
\begin{align}
\min_{\mtbfQ,\mtbfA} & \sum_{n=1}^{N}\left(\left\Vert \mtbfx_{n}-\mtbfQ\mtbfa_{n}\right\Vert _{2}^{2}+\alpha\left\Vert \mtbfa_{n}\right\Vert _{2}^{2}\right),\label{eq:TED}
\end{align}
where $\mtbfQ\in\mtbbR^{L\times K}$ is the selected subset matrix,
whose column vectors all come from $\mtbfX$, i.e. $\mtbfq_{k}\in\mtbfX,\forall k\in\left\{ 1,\cdots,K\right\} $;
$\mtbfA\!=\!\left[\mtbfa_{1},\cdots,\mtbfa_{N}\right]\in\mtbbR^{K\times N}$
is the corresponding linear combination coefficients. By minimizing\ \eqref{eq:TED},
TED could select the highly informative and representative samples,
as they have to well represent all the samples in $\mtbfX$. 

Although TED\ \eqref{eq:TED} is well modeled---very accurate and
intuitive, there are two bottlenecks. First, the objective is a combinatorial
optimization problem. It is NP-hard to exhaustively search the optimal
subset $\mtbfQ$ from $\mtbfX$. For this reason, the author approximate\ \eqref{eq:TED}
via a sequential optimization problem, which is solved by an inefficient
greedy optimization algorithm. Second, similar to the existing least
square loss based models in machine learning and statistics,\ \eqref{eq:TED}
is sensitive to the presence of outliers\ \cite{huaWang_2014_ICML_RobustMetricLearning}.
\begin{table}[t]
\centering{}\caption{{\footnotesize{}}{\small{Complexity comparison of three algorithms
at one iteration step. Generally, data size is much larger than feature
length, i.e. $N\!\gg\! L$. Compared with RRSS$_{\text{Nie}}$ (}}\emph{\small{Nie's
model via the authorial solver}}{\small{), RRSS$_{\text{our}}$ (}}\emph{\small{Nie's
method speeded up by our solver}}{\small{) and ARSS$_{\text{our}}$
(}}\emph{\small{ours}}{\small{) are significantly simplified.\label{tab:complexity_3_methods}}}}
\begin{tabular}{|c|c|c|c|}
\hline 
Methods &
RRSS$_{\text{Nie}}$ &
RRSS$_{\text{our}}$ &
ARSS$_{\text{our}}$\tabularnewline
\hline 
Complex. &
$O\left(N^{4}\right)$ &
$O\left(N^{2}L+NL^{3}\right)$ &
$O\left(N^{2}L\right)$\tabularnewline
\hline 
\end{tabular}
\end{table}

Accordingly, Nie \emph{et al.} propose a new model (RRSS):
\begin{equation}
\min_{\mtbfA\in\mtbbR^{N\times N}}\sum_{n=1}^{N}\left\Vert \mtbfx_{n}-\mtbfX\mtbfa_{n}\right\Vert _{2}+\gamma\left\Vert \mtbfA\right\Vert _{2,1},\label{eq:nieRRSS_vector}
\end{equation}
where $\gamma$ is a nonnegative parameter; $\mtbfA$ is constrained
to be row-sparse, and thus to select the most representative and informative
samples\ \cite{FpNie_2013_IJCAI_EarlyActiveLearning}. As the representation
loss is accumulated via the $\ell_{1}$-norm among samples, compared
with\ \eqref{eq:TED}, the robustness against outlier samples is
enhanced. Equivalently,\ \eqref{eq:nieRRSS_vector} is rewritten
in the matrix format: 
\begin{equation}
\min_{\mtbfA\in\mtbbR^{N\times N}}\left\Vert \left(\mtbfX-\mtbfX\mtbfA\right)^{T}\right\Vert _{2,1}+\gamma\left\Vert \mtbfA\right\Vert _{2,1}.\label{eq:nieRRSS_matrix}
\end{equation}
Since the objective\ \eqref{eq:nieRRSS_matrix} is convex in $\mtbfA$,
the global minimum may be found by differentiating\ \eqref{eq:nieRRSS_matrix}
and setting the derivative to zero\ \cite{ALevin_2008_PAMI_closedFormd},
resulting in a linear system%
\footnote{To avoid singular failures, we get $V_{nn}=\frac{1}{2\sqrt{\left\Vert \mtbfa^{n}\right\Vert _{2}^{2}+\epsilon}}$,
$U_{nn}=\frac{1}{2\sqrt{\left\Vert \mtbfx_{n}-\mtbfX\mtbfa_{n}\right\Vert _{2}^{2}+\epsilon}}$
$\left(\epsilon>0\right)$. Then the algorithm is to minimize the
objective of $\sum_{n}^{N}\sqrt{\left\Vert \mtbfx_{n}-\mtbfX\mtbfa_{n}\right\Vert _{2}^{2}+\epsilon}+\gamma\sum_{n}^{N}\sqrt{\left\Vert \mtbfa^{n}\right\Vert _{2}^{2}+\epsilon}$.
When $\epsilon\rightarrow0$, this objective is reduced to the objective\,\eqref{eq:nieRRSS_matrix}.%
} {\small{
\begin{equation}
\mtbfa_{n}\!=\! U_{nn}\!\left(U_{nn}\mtbfX^{T}\mtbfX+\gamma\mtbfV\right)^{-1}\mtbfX^{T}\mtbfx_{n},\ \forall n\!=\!\left\{ 1,\!2,\!\cdots\!,\! N\right\} \!,\label{eq:nieRRSS_solver}
\end{equation}
}}where $\mtbfV\in\mtbbR^{N\times N}$ is a diagonal matrix with the
$n^{\mtth}$ diagonal entry as $V_{nn}=\frac{1}{2\left\Vert \mtbfa^{n}\right\Vert _{2}}$
and $U_{nn}=\frac{1}{2\left\Vert \mtbfx_{n}-\mtbfX\mtbfa_{n}\right\Vert _{2}}$. 

It seems perfect to use\ \eqref{eq:nieRRSS_solver} to solve the
objective\ \eqref{eq:nieRRSS_matrix}, because\ \eqref{eq:nieRRSS_solver}
looks simple and the global optimum is theoretically guaranteed\ \cite{FpNie_2013_IJCAI_EarlyActiveLearning}.
Unfortunately, in terms of speed,\ \eqref{eq:nieRRSS_solver} is
usually infeasible due to the incredible computational demand in the
case of large $N$ (the number of samples). At each iteration, the
computational complexity of~\eqref{eq:nieRRSS_solver} is up to $O\left(\! N^{4}\!\right)$,
as analyzed in Remark\,\ref{remark:RRSSnie_computationalComplexity}.
According to our experiments, the time cost is  up to 2088 hours
(i.e. 87 days) for a subset selection problem of 13000 samples. 

\begin{remark}\label{remark:RRSSnie_computationalComplexity} Since
$U_{nn}\mtbfX^{T}\mtbfX\!+\!\gamma\mtbfV\!\in\!\mtbbR^{N\times N}$,
the major computational cost of\ \eqref{eq:nieRRSS_solver} focuses
on a $N\times N$  linear system. If solved by the Cholesky factorization
method, it costs $\frac{1}{3}N^{3}$ for factorization as well as
$2N^{2}$ for forward and backward substitution. This amounts to $O\left(N^{3}\right)$
in total. By now, we only solve $\mtbfa_{n}$. Once solving all the
set of $\left\{ \mtbfa_{n}\right\} _{n=1}^{N}$, the total complexity
amounts to $O\left(\! N^{4}\!\right)$ for one iteration step. \end{remark}

\section{Accelerated Robust Subset Selection (ARSS)}

Due to the huge computational costs, Nie's method is infeasible for
the case of large $N$---the computational time is up to 2088 hours
for a case of 13000 samples. Besides, Nie's model\ \eqref{eq:nieRRSS_matrix}
imposes the $\ell_{2}$-norm among features, which is prone to outliers
in features. To tackle the above two issues, we propose a more robust
model in the $\ell_{p}\left(0<p\leq1\right)$-norm. Although the resulted
objective is challenging to solve,  a speedup algorithm is proposed
to dramatically save the computational costs. For the same task of
$N=13000$, it costs our method 1.8 minutes, achieving a 68429 times
acceleration compared with the speed of Nie's method.

\subsubsection{Modeling. }

To boost the robustness against outliers in both samples and features,
we formulate the discrepancy between $\mtbfX$ and $\mtbfX\mtbfA$
via the $\ell_{p}\!\left(0\!<\! p\!<\!1\right)$-norm. There are theoretical
and empirical evidences to verify that compared with $\ell_{2}$ or
$\ell_{1}$ norms, the $\ell_{\! p}$-norm is more able to prevent
outlier elements from dominating the objective, enhancing the robustness\ \cite{fpNie_2012_ICDM_robustMatrixCompletion}.
Thus, we have the following objective
\begin{equation}
\min_{\mtbfA\in\mtbbR^{N\!\times\! N}}\mtcalO=\left\Vert \mtbfX-\mtbfX\mtbfA\right\Vert _{p}^{p}+\gamma\left\Vert \mtbfA\right\Vert _{2,1},\label{eq:ourObjectiveFunction}
\end{equation}
where $\gamma$ is a balancing parameter; $\mtbfA$ is a row sparse
matrix, used to select the most informative and representative samples.
By minimizing the energy of\ \eqref{eq:ourObjectiveFunction}, we
could capture the most essential properties of the dataset $\mtbfX$. 

After obtaining the optimal $\mtbfA$, the row indexes are sorted
by the row-sum value of the absolute $\mtbfA$ in decreasing order.
The samples specified by the top $K$ indexes are selected as exemplars.
Note that the model\ \eqref{eq:ourObjectiveFunction} could be applied
to the unsupervised feature selection problem by only transposing
the data matrix $\mtbfX$. In this case, $\mtbfA$ is a $L\times L$
row sparse matrix, used to select the most representative features.

\subsection{Accelerated Solver for the ARSS Objective in\ \eqref{eq:ourObjectiveFunction} }

Although objective\ \eqref{eq:ourObjectiveFunction} is challenging
to solve, we propose an effective and highly efficient solver. The
acceleration owes to the ALM and an equivalent derivation.

\paragraph{ALM }

The most intractable challenge of\ \eqref{eq:ourObjectiveFunction}
is that, the $\ell_{p}\left(0<p\leq1\right)$-norm is non-convex,
non-smooth and not-differentiable at the zero point. Therefore, it
is beneficial to use the Augmented Lagrangian Method (ALM)\ \cite{Nocedal_2006_book_numericalOptimization}
to solve\ \eqref{eq:ourObjectiveFunction}, resulting in several
easily tackled unconstrained subproblems. By solving them iteratively,
the solutions of subproblems could eventually converge to a minimum\ \cite{cbLi_2011_PhDthesis_CSfor3D_data,gaofengMeng_2013_ICCV_dehazing}. 

Specifically, we introduce an auxiliary variable $\mtbfE=\mtbfX-\mtbfX\mtbfA\in\mtbbR^{L\times N}$.
Thus, the objective\ \eqref{eq:ourObjectiveFunction} becomes: 
\begin{equation}
\min_{\mtbfA,\mtbfE=\mtbfX-\mtbfX\mtbfA}\left\Vert \mtbfE\right\Vert _{p}^{p}+\gamma\left\Vert \mtbfA\right\Vert _{2,1}.\label{eq:RSS_auxiliaryEquation}
\end{equation}
To deal with the equality constraint in\ \eqref{eq:RSS_auxiliaryEquation},
the most convenient method is to add a penalty, resulting in 
\begin{equation}
\min_{\mtbfA}\left\Vert \mtbfE\right\Vert _{p}^{p}+\gamma\left\Vert \mtbfA\right\Vert _{2,1}+\frac{\mu}{2}\left\Vert \mtbfE-\mtbfX+\mtbfX\mtbfA\right\Vert _{F}^{2},\label{eq:RSS_ALM_penalty}
\end{equation}
 where $\mu$ is a penalty parameter. To guarantee the equality constraint,
it requires $\mu$ approaching infinity, which may cause bad numerical
conditions. Instead, once introducing a Lagrangian multiplier, it
is no longer requiring $\mu\rightarrow\infty$\,\cite{cbLi_2011_PhDthesis_CSfor3D_data,Nocedal_2006_book_numericalOptimization}.
Thus, we rewrite\ \eqref{eq:RSS_ALM_penalty} into the standard ALM
formulation as:
\begin{equation}
\min_{\mtbfA,\mtbfE,\Lambda,\mu}\mtcalL_{A}=\left\Vert \mtbfE\right\Vert _{p}^{p}+\gamma\left\Vert \mtbfA\right\Vert _{2,1}+\frac{\mu}{2}\left\Vert \mtbfE-\mtbfX+\mtbfX\mtbfA+\frac{\Lambda}{\mu}\right\Vert _{F}^{2},\label{eq:RRS_ALM_both}
\end{equation}
where $\Lambda$ consists of $L\times N$ Lagrangian multipliers.
In the following, a highly efficient solver will be given.

\paragraph{The updating rule for  $\Lambda$}

Similar to the iterative thresholding (IT) in\ \cite{JohnWright_2009_NIPS_Rpca,fpNie_2014_ICML_newSolverForSvm},
the degree of violations of the $L\!\times\! N$ equality constraints
are used to update the Lagrangian multiplier:
\begin{equation}
\Lambda\leftarrow\Lambda+\mu\left(\mtbfE-\mtbfX+\mtbfX\mtbfA\right),\label{eq:FSS_updataForMultiplier}
\end{equation}
where $\mu$ is a monotonically increasing parameter over iteration
steps. For example, $\mu\leftarrow\rho\mu$, where $1<\rho<2$ is
a predefined parameter\,\cite{Nocedal_2006_book_numericalOptimization}.

\paragraph{Efficient solver for $\mtbfE$ }

Removing irrelevant terms with $\mtbfE$ from\ \eqref{eq:RRS_ALM_both},
we have
\begin{equation}
\min_{\mtbfE}\left\Vert \mtbfE\right\Vert _{p}^{p}+\frac{\mu}{2}\left\Vert \mtbfE-\mtbfH\right\Vert _{F}^{2},\label{eq:objFunction_E}
\end{equation}
where $\mtbfH=\mtbfX-\mtbfX\mtbfA-\frac{\Lambda}{\mu}\in\mtbbR^{L\times N}$.
According to the definition of the $\ell_{p}$-norm and the \emph{Frobenius-}norm,\ \eqref{eq:objFunction_E}
could be decoupled into $L\times N$ independent and unconstrained
subproblems. The standard form of these subproblems is 
\begin{equation}
\min_{y}f\left(y\right)=\lambda\left|y\right|^{p}+\frac{1}{2}\left(y-c\right)^{2},\label{eq:standard_Lp_problem}
\end{equation}
where $\lambda=\frac{1}{\mu}$ is a given positive parameter, $y$
is the scalar variable need to deal with, $c$ is a known scalar constant.

Zuo \emph{et al.\ }\cite{leiZhang_2013_ICCV_LpShrinkage}\emph{ }has
recently proposed a generalized iterative shrinkage algorithm to solve\ \eqref{eq:standard_Lp_problem}.
This algorithm is easy to implement and able to achieve more accurate
solutions than current methods. Thus, we use it for our problem as:
\begin{equation}
y^{*}=\max\left(\left|c\right|-\tau_{p}\left(\lambda\right),0\right)\cdot S_{p}\left(\left|c\right|;\lambda\right)\cdot\mtsign\left(c\right),\label{eq:GST_Lp_shrinkageSolver}
\end{equation}
where 
\[
\tau_{p}\left(\lambda\right)=\left[2\lambda\left(1-p\right)\right]^{\frac{1}{2-p}}+\lambda p\left[2\lambda\left(1-p\right)\right]^{\frac{p-1}{2-p}}
\]
; $S_{p}\left(\left|c\right|;\lambda\right)$ is obtained by solving
the following equation: 
\[
S_{p}\left(c;\lambda\right)-c+\lambda p\left(S_{p}\left(c;\lambda\right)\right)^{p-1}=0,
\]
which could be solved efficiently via an iterative algorithm. In this
manner,\ \eqref{eq:objFunction_E} could be sovled extremely fast.

\paragraph{Accelerated solver for $\mtbfA$ }

The main acceleration focuses on the solver of $\mtbfA$. Removing
irrelevant terms with $\mtbfA$ from\ \eqref{eq:RRS_ALM_both}, we
have
\begin{equation}
\min_{\mtbfA}\left\Vert \mtbfA\right\Vert _{2,1}+\frac{\beta}{2}\mttrace\left\{ \left(\mtbfX\mtbfA-\mtbfP\right)^{T}\left(\mtbfX\mtbfA-\mtbfP\right)\right\} ,\label{eq:objFunction_A}
\end{equation}
where $\beta=\frac{\mu}{\gamma}$ is a nonnegative parameter, $\mtbfP=\mtbfX-\mtbfE-\frac{\Lambda}{\mu}\in\mtbbR^{L\times N}$.
Since\ \eqref{eq:objFunction_A} is convex in $\mtbfA$, the optimum
could be found by differentiating\ \eqref{eq:objFunction_A} and
setting the derivative to zero. This amounts to tackling the following
linear system%
\footnote{{\small{$\mtbfV\in\mtbbR^{N\times N}$ is a positive and diagonal
matrix with the $n^{\mtth}$ diagonal entry as $V_{nn}\!=\!\frac{1}{\sqrt{\left\Vert \mtbfa^{n}\right\Vert _{2}^{2}+\epsilon}}>0$,
where $\epsilon$ is a small value to avoid singular failures\ \cite{FpNie_2013_IJCAI_EarlyActiveLearning,fyzhu_2014_JSTSP_RRLbS}. }}%
}: 
\begin{equation}
\mtbfA=\beta\left(\mtbfV+\beta\mtbfX^{T}\mtbfX\right)^{-1}\mtbfX^{T}\mtbfP.\label{eq:FSS_update_A1}
\end{equation}
As $\mtbfV+\beta\mtbfX^{T}\mtbfX\in\mtbbR^{N\times N}$,\ \eqref{eq:FSS_update_A1}
is mainly a $N\!\times\! N$ linear system. Once solved by the Cholesky
factorization, the computational complexity is highly up to $O\left(N^{3}\right)$.
This is by no means a good choice for real applications with large
$N$. In the following, an equivalent derivation of\ \eqref{eq:FSS_update_A1}
will be proposed to significantly save the computational complexity. 

\begin{theorem} \label{theorem:FSS_update_A1_A2}The $N\times N$
linear system\ \eqref{eq:FSS_update_A1} is equivalent to the following
$L\times L$ linear system: 
\begin{align}
\mtbfA & =\beta\left(\mtbfX\mtbfV^{-1}\right)^{T}\left[\mtbfI_{L}+\beta\mtbfX\left(\mtbfX\mtbfV^{-1}\right)^{T}\right]^{-1}\mtbfP,\label{eq:FSS_update_A2}
\end{align}
 where $\mtbfI_{L}$ is a $L\times L$ identity matrix. \end{theorem}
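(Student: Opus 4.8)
The plan is to recognise \eqref{eq:FSS_update_A1} and \eqref{eq:FSS_update_A2} as the two sides of the matrix \emph{push-through} identity $(\mtbfI_{N}+\mtbfU\mtbfW)^{-1}\mtbfU=\mtbfU(\mtbfI_{L}+\mtbfW\mtbfU)^{-1}$, applied to the $N\times L$ matrix $\mtbfU=\beta\mtbfV^{-1}\mtbfX^{T}$ and the $L\times N$ matrix $\mtbfW=\mtbfX$. First I would secure invertibility of the matrices being inverted. Since $\mtbfV$ is positive and diagonal and $\beta\ge 0$, the matrix $\mtbfV+\beta\mtbfX^{T}\mtbfX$ is symmetric positive definite, so \eqref{eq:FSS_update_A1} is well posed; likewise $\mtbfX\mtbfV^{-1}\mtbfX^{T}\succeq 0$ gives that $\mtbfI_{L}+\beta\mtbfX\mtbfV^{-1}\mtbfX^{T}$ is symmetric positive definite, so the $L\times L$ inverse in \eqref{eq:FSS_update_A2} exists as well.

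Next I would factor $\mtbfV$ out of the $N\times N$ coefficient matrix: $\mtbfV+\beta\mtbfX^{T}\mtbfX=\mtbfV\left(\mtbfI_{N}+\beta\mtbfV^{-1}\mtbfX^{T}\mtbfX\right)$, hence $\left(\mtbfV+\beta\mtbfX^{T}\mtbfX\right)^{-1}=\left(\mtbfI_{N}+\beta\mtbfV^{-1}\mtbfX^{T}\mtbfX\right)^{-1}\mtbfV^{-1}$, and so \eqref{eq:FSS_update_A1} becomes $\mtbfA=\left(\mtbfI_{N}+\beta\mtbfV^{-1}\mtbfX^{T}\mtbfX\right)^{-1}\left(\beta\mtbfV^{-1}\mtbfX^{T}\right)\mtbfP$. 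I would then verify the push-through identity itself from the trivial equality $\mtbfU\left(\mtbfI_{L}+\mtbfW\mtbfU\right)=\mtbfU+\mtbfU\mtbfW\mtbfU=\left(\mtbfI_{N}+\mtbfU\mtbfW\right)\mtbfU$ by multiplying on the left by $\left(\mtbfI_{N}+\mtbfU\mtbfW\right)^{-1}$ and on the right by $\left(\mtbfI_{L}+\mtbfW\mtbfU\right)^{-1}$. With $\mtbfU=\beta\mtbfV^{-1}\mtbfX^{T}$ and $\mtbfW=\mtbfX$ one has $\mtbfU\mtbfW=\beta\mtbfV^{-1}\mtbfX^{T}\mtbfX$ and $\mtbfW\mtbfU=\beta\mtbfX\mtbfV^{-1}\mtbfX^{T}$, so the identity turns the previous line into $\mtbfA=\beta\mtbfV^{-1}\mtbfX^{T}\left(\mtbfI_{L}+\beta\mtbfX\mtbfV^{-1}\mtbfX^{T}\right)^{-1}\mtbfP$.

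Finally, because $\mtbfV$ is diagonal it is symmetric, so $\mtbfV^{-1}\mtbfX^{T}=\left(\mtbfX\mtbfV^{-1}\right)^{T}$ and $\mtbfX\mtbfV^{-1}\mtbfX^{T}=\mtbfX\left(\mtbfX\mtbfV^{-1}\right)^{T}$; making these two purely cosmetic substitutions in the last display produces \eqref{eq:FSS_update_A2} verbatim. I do not expect a genuine obstacle: the argument is a one-line application of the push-through identity once $\mtbfV$ has been factored out. The only points requiring care are bookkeeping --- keeping the two identity matrices $\mtbfI_{N}$ and $\mtbfI_{L}$ and the left/right order of the factors straight --- and checking the positive-definiteness claims before invoking the inverses. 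The payoff is structural rather than delicate: \eqref{eq:FSS_update_A2} replaces the $O\!\left(N^{3}\right)$ $N\times N$ Cholesky solve of \eqref{eq:FSS_update_A1} by an $L\times L$ solve, which (together with forming $\mtbfX\mtbfV^{-1}$ and $\mtbfX\left(\mtbfX\mtbfV^{-1}\right)^{T}$) is exactly what drives the acceleration reported in Table~\ref{tab:complexity_3_methods}.
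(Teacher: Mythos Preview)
Your proof is correct and follows essentially the same route as the paper: both reduce to the push-through identity $\left(\mtbfI_{N}+\mtbfU\mtbfW\right)^{-1}\mtbfU=\mtbfU\left(\mtbfI_{L}+\mtbfW\mtbfU\right)^{-1}$, established via the trivial equality $\mtbfU\left(\mtbfI_{L}+\mtbfW\mtbfU\right)=\left(\mtbfI_{N}+\mtbfU\mtbfW\right)\mtbfU$ and multiplication by the two inverses. The only cosmetic difference is that the paper factors $\mtbfV$ symmetrically as $\mtbfV^{-\frac{1}{2}}\left(\cdot\right)\mtbfV^{-\frac{1}{2}}$ and works with $\mtbfZ=\mtbfX\mtbfV^{-\frac{1}{2}}$, whereas you factor $\mtbfV$ on one side and take $\mtbfU=\beta\mtbfV^{-1}\mtbfX^{T}$; your one-sided factoring is slightly more direct and your explicit positive-definiteness check is a welcome addition the paper omits.
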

\begin{proof} Note that $\mtbfV$ is a $N\times N$ diagonal and
positive-definite matrix, the exponent of $\mtbfV$ is efficient to
achieve, i.e. $\mtbfV^{\alpha}=\left\{ V_{nn}^{\alpha}\right\} _{n=1}^{N},\forall\alpha\!\in\!\mtbbR$.
We have the following equations
\begin{align}
\mtbfA & =\beta\left(\mtbfV+\beta\mtbfX^{T}\mtbfX\right)^{-1}\mtbfX^{T}\mtbfP\nonumber \\
 & =\beta\mtbfV^{-\frac{1}{2}}\left[\mtbfV^{-\frac{1}{2}}\left(\mtbfV+\beta\mtbfX^{T}\mtbfX\right)\mtbfV^{-\frac{1}{2}}\right]^{-1}\mtbfV^{-\frac{1}{2}}\mtbfX^{T}\mtbfP\nonumber \\
 & =\beta\mtbfV^{-\frac{1}{2}}\left(\mtbfI_{N}+\beta\mtbfZ^{T}\mtbfZ\right)^{-1}\mtbfZ^{T}\mtbfP,\label{eq:proof_media_results}
\end{align}
where $\mtbfZ=\mtbfX\mtbfV^{-\frac{1}{2}}$, $\mtbfI_{N}$ is a $N\times N$
identity matrix. The following equation holds for any conditions 
\begin{equation}
\left(\mtbfI_{N}+\beta\mtbfZ^{T}\mtbfZ\right)\mtbfZ^{T}=\mtbfZ^{T}\left(\mtbfI_{L}+\beta\mtbfZ\mtbfZ^{T}\right).\label{eq:transform_equation}
\end{equation}
Multiplying\ \eqref{eq:transform_equation} with $\left(\mtbfI_{N}+\beta\mtbfZ^{T}\mtbfZ\right)\!{}^{-1}$
on the left and $\left(\mtbfI_{L}+\beta\mtbfZ\mtbfZ^{T}\right)\!{}^{-1}$
on the right of both sides of the equal-sign, we have the equation
as:
\begin{equation}
\mtbfZ^{T}\left(\mtbfI_{L}+\beta\mtbfZ\mtbfZ^{T}\right)^{-1}=\left(\mtbfI_{N}+\beta\mtbfZ^{T}\mtbfZ\right)^{-1}\mtbfZ^{T}.\label{eq:transform_equation_inv}
\end{equation}
Therefore, substituting\ \eqref{eq:transform_equation_inv} and $\mtbfZ=\mtbfX\mtbfV^{-\frac{1}{2}}$
into\ \eqref{eq:proof_media_results}, we have the simplified updating
rule as: 
\begin{equation}
\mtbfA=\beta\left(\mtbfX\mtbfV^{-1}\right)^{T}\left[\mtbfI_{L}+\beta\mtbfX\left(\mtbfX\mtbfV^{-1}\right)^{T}\right]^{-1}\mtbfP.
\end{equation}
When $N\gg L$, the most complex operation is the matrix multiplications,
not the $L\times L$ linear system.\end{proof} 

\begin{corollary} \label{corollary:ARSS_complexity_A} We have two
equivalent updating rules\ \eqref{eq:FSS_update_A1} and\ \eqref{eq:FSS_update_A2}
for the objective\ \eqref{eq:objFunction_A}. If using\ \eqref{eq:FSS_update_A1}
when $N\!\leq\! L$, and otherwise using\ \eqref{eq:FSS_update_A2}
as shown in Algorithm\ \ref{alg:FSS_update_A}, the computational
complexity of solvers for\ \textbf{\eqref{eq:objFunction_A}} is
$O\left(N{}^{2}L\right)$. Due to $N\gg L$, we have highly reduced
the complexity from $O\left(N{}^{4}\right)$ to $O\left(N{}^{2}L\right)$
compared with Nie's method. \end{corollary}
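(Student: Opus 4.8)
The first assertion of the corollary---that \eqref{eq:FSS_update_A1} and \eqref{eq:FSS_update_A2} produce the same $\mtbfA$---is exactly Theorem \ref{theorem:FSS_update_A1_A2}, so there is nothing new to prove there. The plan for the complexity bound is a direct operation count, split on the two cases that Algorithm \ref{alg:FSS_update_A} distinguishes.

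For $N\le L$, where the algorithm uses \eqref{eq:FSS_update_A1}, I would bound the cost term by term: forming $\mtbfX^{T}\mtbfX\in\mtbbR^{N\times N}$ costs $O(N^{2}L)$, adding the diagonal $\mtbfV$ and the scalar weight $\beta$ costs $O(N^{2})$, and assembling the right-hand side $\mtbfX^{T}\mtbfP\in\mtbbR^{N\times N}$ costs $O(N^{2}L)$. Solving the resulting $N\times N$ system against $N$ right-hand sides by a Cholesky factorization costs $\tfrac{1}{3}N^{3}$ for the factorization plus $O(N^{3})$ for the $N$ forward/backward substitutions, exactly as in Remark \ref{remark:RRSSnie_computationalComplexity}. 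Summing these and using $N\le L$ to absorb $N^{3}$ into $N^{2}L$ gives $O(N^{2}L)$ for this branch.

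For $N>L$, where the algorithm uses \eqref{eq:FSS_update_A2}, I would evaluate the formula from the inside out. The product $\mtbfX\mtbfV^{-1}$ is a column rescaling of $\mtbfX$ (cheap because $\mtbfV$ is diagonal and positive), costing $O(NL)$; the $L\times L$ matrix $\mtbfI_{L}+\beta\mtbfX(\mtbfX\mtbfV^{-1})^{T}$ costs $O(NL^{2})$ to form; solving this $L\times L$ system against the $L\times N$ block $\mtbfP$ by Cholesky costs $O(L^{3})$ for the factorization and $O(NL^{2})$ for the $N$ substitutions; and finally left-multiplying the $L\times N$ result by $(\mtbfX\mtbfV^{-1})^{T}\in\mtbbR^{N\times L}$ and scaling by $\beta$ costs $O(N^{2}L)$. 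Since $L<N$, each of $NL$, $NL^{2}$, $L^{3}$ is dominated by $N^{2}L$, so this branch is $O(N^{2}L)$ as well. Taking the two branches together gives the claimed $O(N^{2}L)$, and because $N\gg L$ this is strictly below the $O(N^{4})$ of Nie's solver, establishing the closing sentence.

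I do not anticipate a genuine obstacle; the only delicate point is in the $N>L$ branch, where one must check that the sole $N\times N$ matrix ever instantiated is the output $\mtbfA$ itself, formed by the last product, and that nothing larger than $L\times L$ is ever factorized or inverted---naively forming and factorizing $\mtbfI_{N}+\beta\mtbfZ^{T}\mtbfZ$ would throw the saving away and return the cost to $O(N^{3})$ per call. I would also note that the $N^{2}L$ term is intrinsic to this update, since merely writing out $\mtbfA\in\mtbbR^{N\times N}$ through the final product of \eqref{eq:FSS_update_A2} already costs that much, so $O(N^{2}L)$ cannot be improved by a cleverer ordering.
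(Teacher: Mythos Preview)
Your proposal is correct and follows the same operation-counting approach the paper relies on; in fact, the paper does not give a standalone proof of this corollary, offering only the closing line of Theorem~\ref{theorem:FSS_update_A1_A2}'s proof (``When $N\gg L$, the most complex operation is the matrix multiplications, not the $L\times L$ linear system'') as justification. Your branch-by-branch accounting, including the explicit treatment of the $N\le L$ case and the observation that the final $N\times N$ product is the unavoidable bottleneck, is more detailed than anything the paper itself supplies.
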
 
\begin{algorithm}[t]
\caption{ for\textbf{\ \eqref{eq:objFunction_A}}: $\mtbfA^{*}=\text{ARSS}_{\mtbfA}\left(\mtbfX,\mtbfV,\mtbfP,\mtbfI_{L},\beta\right)$\textbf{
\label{alg:FSS_update_A}}}
 \textbf{Input:} $\mtbfX,\mtbfV,\mtbfP,\mtbfI_{L},\beta$

\begin{algorithmic}[1] 

\IF{ $N\leq L$ }

\STATE update $\mtbfA$ via the updating rule\ \eqref{eq:FSS_update_A1},
that is

\STATE $\mtbfA=\beta\left(\mtbfV+\beta\mtbfX^{T}\mtbfX\right)\!{}^{-1}\mtbfX^{T}\mtbfP$.

\ELSIF{ $N>L$}

\STATE update $\mtbfA$ via the updating rule\ \eqref{eq:FSS_update_A2},
that is

\STATE $\mtbfA=\mtbfB\left(\mtbfI_{L}+\mtbfX\mtbfB\right)\!{}^{-1}\mtbfP$,
where $\mtbfB=\beta\left(\mtbfX\mtbfV^{-1}\right)\!{}^{T}$. 

\ENDIF 

\end{algorithmic} 

\textbf{Output: $\mtbfA$}
\end{algorithm}
 
\begin{algorithm}[t]
\caption{for\ \eqref{eq:ourObjectiveFunction} or\ \eqref{eq:RRS_ALM_both}:
$\mtbfA^{*}=\text{ARSS}_{\text{ALM}}\left(\mtbfX,\gamma,p\right)$\textbf{\label{alg:FSS_ALM}}}
 \textbf{Input:} $\mtbfX,\gamma,p$

\begin{algorithmic}[1] 

\STATE Initialize\emph{ }$\mu\!>\!0,1\!<\!\rho\!<\!2,\epsilon\!=\!10^{-10},\mtbfA\!=\!\mtbfI_{N},\Lambda\!=\!\mtbfzero$.

\REPEAT

\STATE update $\mtbfE$ by the updating rule~\eqref{eq:GST_Lp_shrinkageSolver}.

\STATE update $\mtbfV=\left[V_{nn}\right]\in\mtbbR^{N\times N}$.

\STATE $\mtbfP=\mtbfX\!-\!\mtbfE\!-\!\frac{\Lambda}{\mu},\beta=\frac{\mu}{\gamma}$;
$\mtbfI_{L}$ is a $L\times L$ identity matrix.

\STATE $\mtbfA=\text{ARSS}_{\mtbfA}\left(\mtbfX,\mtbfV,\mtbfP,\mtbfI_{L},\beta\right)$
via \textbf{Algorithm\ }\ref{alg:FSS_update_A}.

\STATE update $\Lambda$ by the updating rule~\eqref{eq:FSS_updataForMultiplier},
$\mu\leftarrow\rho\mu$.

\UNTIL{convergence} 

\end{algorithmic} 

\textbf{Output:} $\mtbfA$ 
\end{algorithm}

The solver to update $\mtbfA$ is given in Algorithm\ \ref{alg:FSS_update_A}.
The overall solver for our model\ \eqref{eq:ourObjectiveFunction}
is summarized in Algorithm\ \ref{alg:FSS_ALM}. 

According to Theorem\ \ref{theorem:FSS_update_A1_A2} and Corollary\ \ref{corollary:ARSS_complexity_A},
the solver  for our model\ \eqref{eq:objFunction_A} is highly simplified,
as feature length is generally much smaller than data size, i.e $L\!\ll\! N$.
Similarly, Nie's method could be highly accelerated by Theorem\ \ref{theorem:RRSS_nie2our},
obtaining 500+ times speedup, as shown in Fig.\ \ref{fig:Time_vs_N}
and Table\ \ref{tab:Compare_Time_perform}.

\begin{theorem} \label{theorem:RRSS_nie2our} Nie's $N\times N$
solver\ \eqref{eq:RRSS_nie_solver}\ \cite{FpNie_2013_IJCAI_EarlyActiveLearning}
is equivalent to the following $L\times L$ linear system\ \eqref{eq:RRSS_our_solver}
\begin{align}
\mtbfa_{n}\! & =\! U_{nn}\!\left(U_{nn}\mtbfX^{T}\mtbfX+\gamma\mtbfV\right)^{-1}\mtbfX^{T}\mtbfx_{n}\label{eq:RRSS_nie_solver}\\
 & =\! U_{nn}\!\left(\mtbfX\mtbfV^{-1}\right)^{T}\!\left(U_{nn}\mtbfX\left(\mtbfX\mtbfV^{-1}\right)^{T}\!+\!\gamma\mtbfI_{L}\right)^{-1}\!\mtbfx_{n}\label{eq:RRSS_our_solver}
\end{align}
$\forall n\in\left\{ 1,2,\cdots,N\right\} $, where $\mtbfI_{L}$
is a $L\times L$ identity matrix.\end{theorem}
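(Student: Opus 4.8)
The plan is to follow exactly the route used in the proof of Theorem~\ref{theorem:FSS_update_A1_A2}, since this is a purely algebraic identity between two closed-form expressions for the same $\mtbfa_n$; no convexity or optimality argument is needed. The one structural difference is that here the scalar $U_{nn}$ is attached to $\mtbfX^T\mtbfX$ and $\gamma$ to $\mtbfV$, so I must keep these two constants separate throughout instead of folding them into a single $\beta$. Throughout I use that $\mtbfV$ is diagonal and strictly positive definite (and $U_{nn}>0$) under the $\epsilon$-regularisation already assumed in the footnote to \eqref{eq:nieRRSS_solver}, so every inverse below exists and $\mtbfV^{\alpha}$ is just the entrywise power.

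First I would symmetrise the $N\times N$ coefficient matrix by pulling out $\mtbfV^{\pm\frac{1}{2}}$. Setting $\mtbfZ=\mtbfX\mtbfV^{-\frac{1}{2}}$, one has $U_{nn}\mtbfX^T\mtbfX+\gamma\mtbfV=\gamma\,\mtbfV^{\frac{1}{2}}\left(\mtbfI_N+\tfrac{U_{nn}}{\gamma}\mtbfZ^T\mtbfZ\right)\mtbfV^{\frac{1}{2}}$, and hence, using $\mtbfV^{-\frac{1}{2}}\mtbfX^T=\mtbfZ^T$,
\[
\mtbfa_n=\frac{U_{nn}}{\gamma}\,\mtbfV^{-\frac{1}{2}}\left(\mtbfI_N+\frac{U_{nn}}{\gamma}\mtbfZ^T\mtbfZ\right)^{-1}\mtbfZ^T\mtbfx_n .
\]
Next I apply the push-through identity already established as \eqref{eq:transform_equation}--\eqref{eq:transform_equation_inv}, namely $\left(\mtbfI_N+\beta\mtbfZ^T\mtbfZ\right)^{-1}\mtbfZ^T=\mtbfZ^T\left(\mtbfI_L+\beta\mtbfZ\mtbfZ^T\right)^{-1}$, here with $\beta=U_{nn}/\gamma$; this relocates the inverse to the $L\times L$ matrix $\mtbfI_L+\tfrac{U_{nn}}{\gamma}\mtbfZ\mtbfZ^T$. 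Substituting $\mtbfZ=\mtbfX\mtbfV^{-\frac{1}{2}}$ back in and collapsing $\mtbfV^{-\frac{1}{2}}\mtbfZ^T=\mtbfV^{-1}\mtbfX^T=(\mtbfX\mtbfV^{-1})^T$ and $\mtbfZ\mtbfZ^T=\mtbfX\mtbfV^{-1}\mtbfX^T=\mtbfX(\mtbfX\mtbfV^{-1})^T$ gives $\mtbfa_n=\tfrac{U_{nn}}{\gamma}(\mtbfX\mtbfV^{-1})^T\left(\mtbfI_L+\tfrac{U_{nn}}{\gamma}\mtbfX(\mtbfX\mtbfV^{-1})^T\right)^{-1}\mtbfx_n$. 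Finally I factor $\tfrac{1}{\gamma}$ out of the $L\times L$ inverse, turning the bracket into $\gamma\left(U_{nn}\mtbfX(\mtbfX\mtbfV^{-1})^T+\gamma\mtbfI_L\right)^{-1}$; the two stray $\gamma$ factors cancel and \eqref{eq:RRSS_our_solver} falls out verbatim.

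The only real obstacle is this bookkeeping of the two scalars: one has to pull out $\gamma\mtbfV$ (rather than $\mtbfV$ alone) so that the normalised matrix is $\mtbfI_N+\tfrac{U_{nn}}{\gamma}\mtbfZ^T\mtbfZ$, carry the ratio $U_{nn}/\gamma$ unchanged through the push-through step, and then re-absorb $\gamma$ at the end so the $L\times L$ matrix reads exactly $U_{nn}\mtbfX(\mtbfX\mtbfV^{-1})^T+\gamma\mtbfI_L$. I would also state once, as in Corollary~\ref{corollary:ARSS_complexity_A}, that when $N\gg L$ this $L\times L$ form is the cheaper one, which is the point of the identity.
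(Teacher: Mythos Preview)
Your proposal is correct and follows essentially the same route as the paper's own proof: both symmetrise by $\mtbfV^{\pm 1/2}$, set $\mtbfZ=\mtbfX\mtbfV^{-1/2}$, apply the push-through identity \eqref{eq:transform_equation_inv}, and then collapse $\mtbfV^{-1/2}\mtbfZ^{T}$ and $\mtbfZ\mtbfZ^{T}$ back in terms of $\mtbfX\mtbfV^{-1}$. The only cosmetic difference is that you first pull out the extra factor $\gamma$ so the push-through is invoked literally with $\beta=U_{nn}/\gamma$, whereas the paper leaves the coefficients as $U_{nn}$ and $\gamma$ on $\mtbfZ^{T}\mtbfZ$ and $\mtbfI_{N}$ and applies the identity directly in that form; this is the same step and requires no additional justification.
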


\begin{proof}Based on\ \eqref{eq:RRSS_nie_solver}, we have the
following equalities: 
\begin{align*}
\mtbfa_{n}\! & =\! U_{nn}\!\left(U_{nn}\mtbfX^{T}\mtbfX+\gamma\mtbfV\right)^{-1}\mtbfX^{T}\mtbfx_{n},\\
 & =\! U_{nn}\!\mtbfV^{-\frac{1}{2}}\!\left[\mtbfV^{-\frac{1}{2}}\!\left(U_{nn}\mtbfX^{T}\mtbfX\!+\!\gamma\mtbfV\right)\mtbfV^{-\frac{1}{2}}\right]^{-1}\!\mtbfV^{-\frac{1}{2}}\mtbfX^{T}\mtbfx_{n}\\
 & =\! U_{nn}\!\mtbfV^{-\frac{1}{2}}\!\left(\! U_{nn}\!\!\left(\mtbfX\mtbfV^{-\frac{1}{2}}\right)^{T}\!\!\mtbfX\mtbfV^{-\frac{1}{2}}\!+\!\gamma\mtbfI_{N}\!\right)^{-1}\!\left(\mtbfX\mtbfV^{-\frac{1}{2}}\right)^{T}\!\mtbfx_{n}\\
 & =\! U_{nn}\!\mtbfV^{-\frac{1}{2}}\!\left(\mtbfX\mtbfV^{-\frac{1}{2}}\right)^{T}\!\!\left(\! U_{nn}\mtbfX\mtbfV^{-\frac{1}{2}}\!\!\left(\mtbfX\mtbfV^{-\frac{1}{2}}\right)^{T}\!+\!\gamma\mtbfI_{L}\!\right)^{-1}\!\mtbfx_{n}\\
 & =\! U_{nn}\!\left(\mtbfX\mtbfV^{-1}\right)^{T}\!\left(U_{nn}\mtbfX\left(\mtbfX\mtbfV^{-1}\right)^{T}\!+\!\gamma\mtbfI_{L}\right)^{-1}\!\mtbfx_{n}.
\end{align*}
{\small{The derivations are equivalent; their results are equal. }}\end{proof}

\begin{figure*}[tb]
\centering{}\includegraphics[width=2.12\columnwidth]{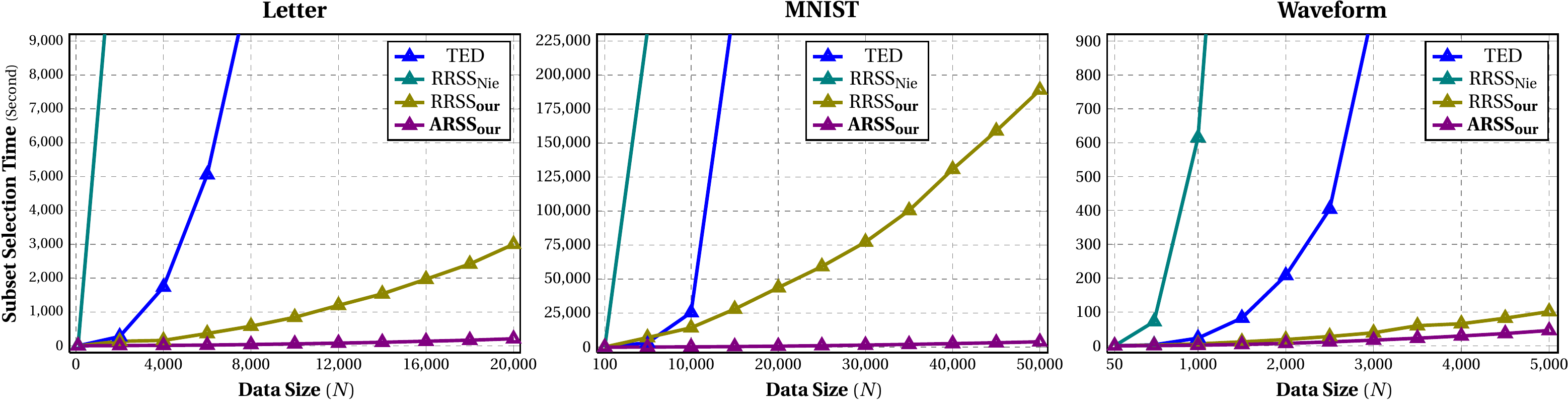}\caption{{\small{Speed }}\emph{\small{vs. }}{\small{increasing}}\emph{\small{
}}{\small{$N$ on (a) Letter, (b) MNIST and (c) Waveform. Compared
with the authorial solver TED and RRSS$_{\text{Nie}}$, our method
ARSS and RRSS$_{\text{our}}$ dramatically reduce the computational
time. The larger data size is, the larger gaps between these methods
are. Note that the selection time is not sensitive to the number of
selected samples $K$. (best viewed in color) \label{fig:Time_vs_N}}}}
\end{figure*}
\begin{table}[t]
\centering{}\caption{{\small{Statistics of ten benchmark datasets.\label{tab:description10Datasets}}} }
\includegraphics[width=1\columnwidth]{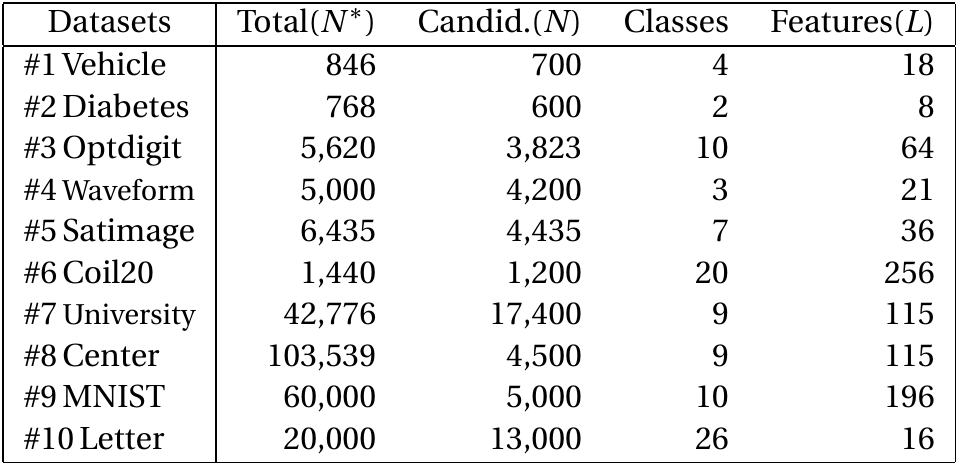}
\end{table}
\begin{corollary} \label{corollary:RRSS_our} Since feature length
is generally much smaller than data size, i.e. $L\!\ll\! N$, our
accelerated solver\ \eqref{eq:RRSS_nie_solver} for Nie's model\ \eqref{eq:nieRRSS_matrix}
is highly faster than the authorial solver\ \eqref{eq:RRSS_our_solver}.
Theoretically, we reduce the computational complexity from $O\left(N{}^{4}\right)$
to $O\left(N{}^{2}L+NL{}^{3}\right)$, while maintaining the same
solution. That is, like Nie's solver\ \eqref{eq:RRSS_nie_solver},
our speedup solver\ \eqref{eq:RRSS_our_solver} can reach the global
optimum. Extensive empirical results will verify the huge acceleration\end{corollary}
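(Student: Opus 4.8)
The plan is to prove the two halves of the claim separately: first that the $L\times L$ reformulation leaves the iterates untouched (so it is still globally optimal), and then the operation count. For the first half there is essentially nothing new to do: by Theorem~\ref{theorem:RRSS_nie2our}, the update \eqref{eq:RRSS_our_solver} is obtained from \eqref{eq:RRSS_nie_solver} through a chain of exact algebraic identities (symmetric rescaling by $\mtbfV^{-\frac{1}{2}}$ followed by the push-through identity), so it returns \emph{exactly} the same $\mtbfa_{n}$ at every step. Consequently the whole alternating procedure produces the same sequence of iterates as Nie's, and in particular converges to the same global minimizer of \eqref{eq:nieRRSS_matrix}, as established in \cite{FpNie_2013_IJCAI_EarlyActiveLearning}. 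The real content is therefore the complexity bound.

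For the complexity, the key observation I would use is that the running index $n$ enters \eqref{eq:RRSS_our_solver} only through the \emph{scalar} $U_{nn}$ and the vector $\mtbfx_{n}$, so everything else can be hoisted out of the loop. Within one outer iteration I would (i) form $\mtbfV^{-1}$ for free (it is diagonal and positive), then $\mtbfB=\left(\mtbfX\mtbfV^{-1}\right)^{T}\in\mtbbR^{N\times L}$ in $O(NL)$ and $\mtbfG=\mtbfX\mtbfB\in\mtbbR^{L\times L}$ in $O(NL^{2})$; and (ii) for each $n$, assemble $U_{nn}\mtbfG+\gamma\mtbfI_{L}$ in $O(L^{2})$, solve this $L\times L$ system against $\mtbfx_{n}$ by Cholesky in $O(L^{3})$ to get $\mtbfz_{n}$, and form $\mtbfa_{n}=U_{nn}\mtbfB\mtbfz_{n}$ with one $(N\times L)$-by-$L$ matrix--vector product in $O(NL)$. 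Summing over $n$ gives $O(NL^{2})$ for step (i) and $O(NL^{3})+O(N^{2}L)$ for step (ii), i.e. $O(N^{2}L+NL^{3})$ overall, which is the bound listed in Table~\ref{tab:complexity_3_methods}; since $L\ll N$ this is a dramatic reduction from the $O(N^{4})$ cost of Remark~\ref{remark:RRSSnie_computationalComplexity}.

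I do not expect a genuine mathematical obstacle here --- the argument is careful bookkeeping rather than a new idea. The one thing that must be checked is that the quantities pulled outside the $n$-loop really are $n$-independent: $\mtbfV$ (hence $\mtbfB$ and $\mtbfG$) and all $U_{nn}$ are held fixed during the linear-solve step and refreshed only \emph{between} successive outer iterations of Nie's alternating scheme, exactly as in the iterative algorithm of \cite{FpNie_2013_IJCAI_EarlyActiveLearning}. A secondary point worth spelling out is that the right-hand side of \eqref{eq:RRSS_our_solver} is $\mtbfx_{n}$ itself and not $\mtbfX^{T}\mtbfx_{n}$, so no extra $O(NL)$ projection per column is incurred; this, together with the replacement of the $N\times N$ solve by an $L\times L$ one, is exactly where the acceleration --- and the empirically observed 500+ times speedup --- comes from. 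If a sharper statement were wanted, one could instead diagonalize $\mtbfG$ once and drop each per-$n$ solve to $O(L^{2})$, eliminating the $NL^{3}$ term, but that refinement is not needed for the corollary as stated.
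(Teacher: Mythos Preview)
Your proposal is correct and is essentially the argument the paper has in mind: the paper gives no separate proof of this corollary at all, treating it as an immediate consequence of Theorem~\ref{theorem:RRSS_nie2our} together with routine operation counting (cf.\ Table~\ref{tab:complexity_3_methods} and Remark~\ref{remark:RRSSnie_computationalComplexity}). Your explicit bookkeeping---precomputing $\mtbfB=(\mtbfX\mtbfV^{-1})^{T}$ and $\mtbfG=\mtbfX\mtbfB$ once, then doing an $L\times L$ solve and an $O(NL)$ product per $n$---is exactly the natural way to justify the $O(N^{2}L+NL^{3})$ bound and is, if anything, more detailed than what the paper provides.
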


\section{Experiments}

\subsection{Experimental Settings}

In this part, the experimental settings are introduced. All experiments
are conducted on a server with 64-core Intel Xeon E7-4820 @ 2.00 GHz,
18 Mb Cache and 0.986 TB RAM, using Matlab 2012. Brief descriptions
of ten benchmark datasets are summarized in Table\ \ref{tab:description10Datasets},
where `Total$\left(N^{*}\right)$' denotes the total set of samples
in each data. Due to the high computational complexity, other methods
can only handle small datasets (while our method can handle the total
set). Thus, we randomly choose the candidate set from the total set
to reduce the sample size, i.e. $N<N^{*}$ (cf. `Total$\left(N^{*}\right)$'
and `candid.$\left(N\right)$' in Table\ \ref{tab:description10Datasets}).
The remainder (except candidate set) are used for test. Specifically,
to simulate the varying quality of samples, ten percentage of candidate
samples from each class are randomly selected and arbitrarily added
one of the following three kinds of noise: ``Gaussian'', ``Laplace''
and ``Salt \& pepper'' respectively. In a word, all experiment settings
are same and fair for all the methods.

\subsection{Speed Comparisons}

There are two parts of speed comparisons. First, how speed varies
with increasing $N$ is illustrated in Fig.\ \ref{fig:Time_vs_N}.
Then the comparison of specific speed is summarized in Table\ \ref{tab:Compare_Time_perform}.
Note that TED and RRSS$_{\text{Nie}}$ denote the authorial solver
(via authorial codes); RRSS$_{\text{our}}$ is our accelerated solver
for Nie's model via Theorem\ \ref{theorem:RRSS_nie2our}; ARSS is
the proposed method.

\paragraph{Speed \emph{vs.} increasing $N$}

To verify the great superiority of our method over the state-of-the-art
methods in speed, three experiments are conducted. The results are
illustrated in Fig.\ \ref{fig:Time_vs_N}, where there are three
sub-figures showing the speed of four methods on the benchmark datasets
of\emph{ }Letter, MNIST and Waveform respectively. As we shall see,
both selection time of TED\ \cite{kaiYu_2006_ICML_activeLearning}
and RRSS$_{\text{Nie}}$\ \cite{FpNie_2013_IJCAI_EarlyActiveLearning}
increases dramatically as $N$ increases. No surprisingly, RRSS$_{\text{Nie}}$
is incredibly time-consuming as $N$ grows---the order of curves looks
higher than quadratic. Actually, the theoretical complexity of RRSS$_{\text{Nie}}$
is highly up to $O\left(N^{4}\right)$ as analyzed in Remark\ \ref{remark:RRSSnie_computationalComplexity}.

Compared with TED and RRSS$_{\text{Nie}}$, the curve of ARSS is surprisingly
lower and highly stable against increasing $N$; there is almost no
rise of selection time over growing $N$. This is owing to the speedup
techniques of ALM and equivalent derivations. Via them, we reduce
the computational cost from $O\left(N^{4}\right)$ to $O\left(N{}^{2}L\right)$,
as analyzed in Theorem\ \ref{theorem:FSS_update_A1_A2} and Corollary\ \ref{corollary:ARSS_complexity_A}.
Moreover, with the help of Theorem\ \ref{theorem:RRSS_nie2our},
RRSS$_{\text{our}}$ is the second faster algorithm that is significantly
accelerated compared with the authorial algorithm RRSS$_{\text{Nie}}$.

\subsubsection{Speed with\emph{ }fixed $N$}

The speed of four algorithms is summarized in Table\ \ref{tab:Compare_Time_perform}a,
where each row shows the results on one dataset and the last row displays
the average results. Four conclusions can be drawn from Table\ \ref{tab:Compare_Time_perform}a.
First, ARSS is the fastest algorithm, and RRSS$_{\text{our}}$ is
the second fastest algorithm. Second, with the help of Theorem\ \ref{theorem:RRSS_nie2our},
RRSS$_{\text{our}}$ is highly faster than RRSS$_{\text{Nie}}$, averagely
obtaining a 559 times acceleration. Third, ARSS is dramatically faster
than RRSS$_{\text{Nie}}$ and TED; the results in Table\ \ref{tab:Compare_Time_perform}a
verify an average acceleration of 23275 times faster than RRSS$_{\text{Nie}}$
and 281 times faster than TED. This means that for example if it takes
RRSS$_{\text{Nie}}$ 100 years to do a subset selection task, it only
takes our method 1.6 days to address the same problem. Finally, we
apply ARSS to the whole sample set of each data. The results are displayed
in the $6^{\mtth}$ column in Table\ \ref{tab:Compare_Time_perform},
showing its capability to process very large datasets. 
\begin{table*}[t]
\begin{centering}
\caption{{\small{Performances of TED, RRSS and ARSS: }}\textbf{\scriptsize{(}}\textbf{\emph{\scriptsize{left}}}\textbf{\scriptsize{-}}\textbf{\emph{\scriptsize{a}}}\textbf{\scriptsize{)}}{\small{
speed in seconds, }}\textbf{\scriptsize{(}}\textbf{\emph{\scriptsize{right}}}\textbf{\scriptsize{-}}\textbf{\emph{\scriptsize{b}}}\textbf{\scriptsize{)}}{\small{
prediction accuracies. In terms of speed, with the help of Theorem\ \ref{theorem:RRSS_nie2our},
RRSS$_{\text{our}}$ is averagely 559+ times faster than the authorial
algorithm, i.e. RRSS$_{\text{Nie}}$; ARSS achieves surprisingly 23275+
times acceleration compared with RRSS$_{\text{Nie}}$. Due to the
more robust loss in the $\ell_{p}$-norm, the prediction accuracy
of ARSS is highly encouraging.\label{tab:Compare_Time_perform}}}}
\textbf{\includegraphics[width=2.1\columnwidth]{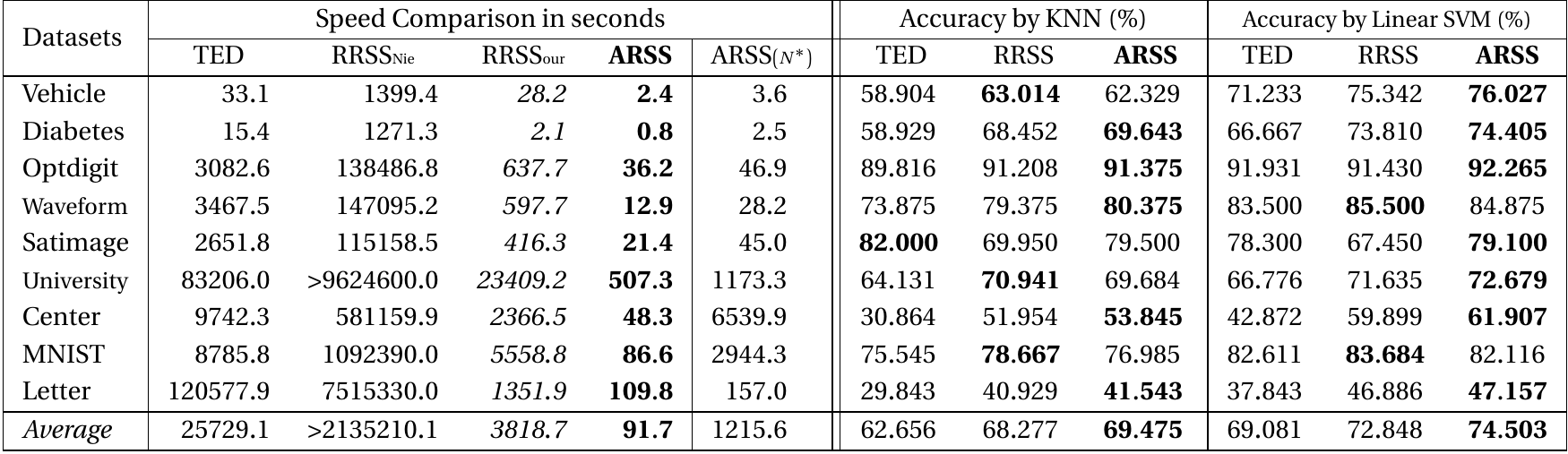}}
\par\end{centering}

\raggedright{}\emph{\footnotesize{`ARSS}}\textbf{\emph{\footnotesize{$\left(N^{*}\right)$}}}\emph{\footnotesize{'}}\textbf{\emph{\footnotesize{
}}}\emph{\footnotesize{means the task of selecting samples from the
whole dataset (with $N^{*}$ samples as shown in the $2^{\text{nd}}$column
in Table\,\ref{tab:description10Datasets}), while `TED' to `}}\textbf{\emph{\footnotesize{ARSS}}}\emph{\footnotesize{'}}\textbf{\emph{\footnotesize{
}}}\emph{\footnotesize{indicate the problem of dealing with the candidate
sample sets (with $N$ samples as shown in the $3^{\text{rd}}$ column
in Table\,\ref{tab:description10Datasets}). }}
\end{table*}
 
\begin{figure*}[t]
\centering{}\includegraphics[width=2.1\columnwidth]{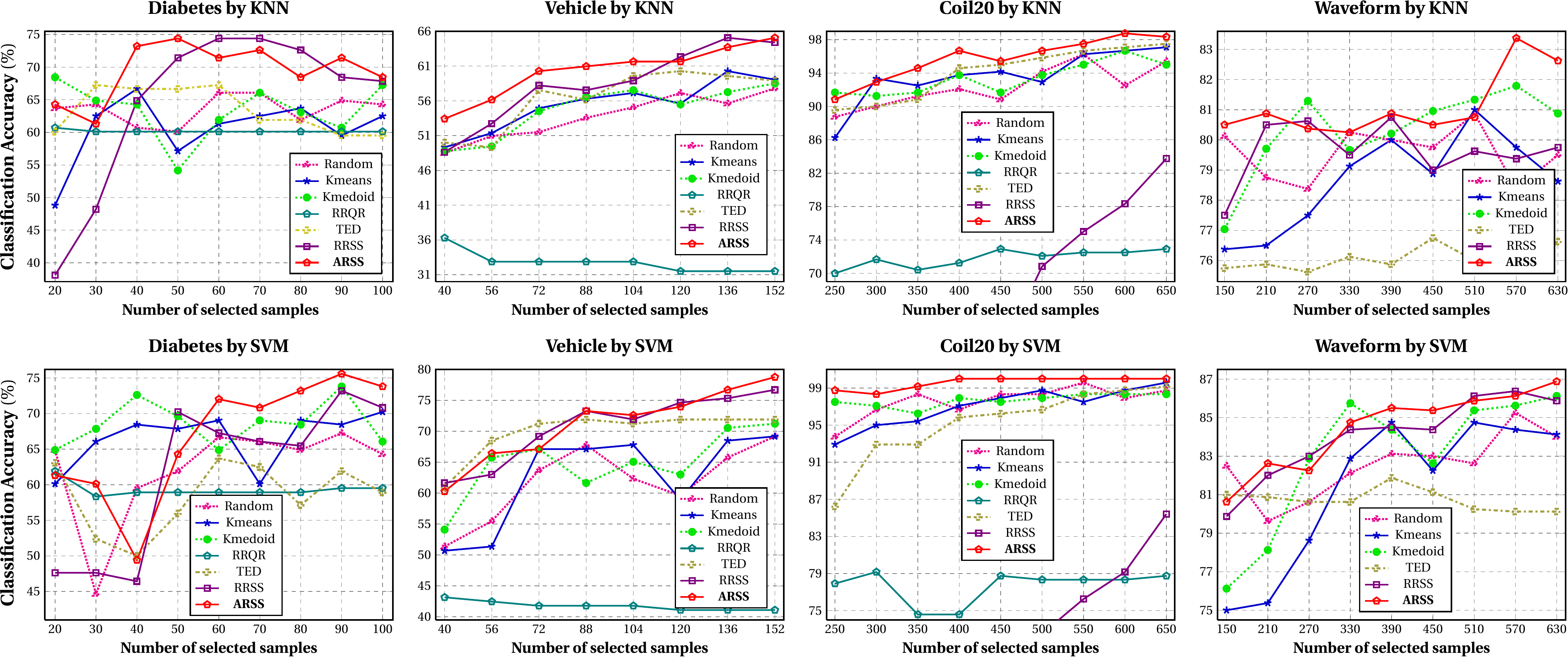}\caption{{\small{Accuracies }}\emph{\small{vs. }}{\small{increasing $K$ (the
number of selected samples). There are two rows and four columns of
subfigures: the top row shows the prediction accuracies of KNN, and
the bottom shows the results of Linear SVM; each column shows the
performances on one datasets,}}\emph{\small{ }}{\small{that is Diabetes,
Vehicle, Coil20 and Waveform respectively. Generally, ARSS (ours)
is among the best. (best viewed in color) \label{fig:accuracy_vs_K}}}}
\end{figure*}

\subsection{Prediction Accuracy}

\paragraph{Accuracy comparison}

We conduct experiments on ten benchmark datasets. For each dataset,
the top 200 representative samples are selected for training. The
prediction accuracies are reported in Table\ \ref{tab:Compare_Time_perform}b,
including the results of two popular classifiers. Three observations
can be drawn from this table. First, Linear SVM generally outperforms
KNN. Second, in general, our method performs the best; for a few cases,
our method achieves comparable results with the best performances.
Third, compared with TED, both RRSS and ARSS achieve an appreciable
advantage. The above analyses are better illustrated  in the last
row of Table\ \ref{tab:Compare_Time_perform}b. These results demonstrate
that the $\ell_{p}$ loss in our model is well suited to select exemplars
from the sample sets of various quality.

\paragraph{Prediction accuracies \emph{vs.} increasing $K$ }

To give a more detailed comparison, Fig.\ \ref{fig:accuracy_vs_K}
shows the prediction accuracies versus growing $K$ (the number of
selected samples). There are two rows and four columns of sub-figures.
The top row shows the results of KNN, and the bottom one shows results
of SVM. Each column gives the result on one dataset. As we shall see,
the prediction accuracies generally increase as $K$ increases. Such
case is consistent with the common view that more training data will
boost the prediction accuracy. For each sub-figure, ARSS is generally
among the best. This case implies that our robust objective\ \eqref{eq:ourObjectiveFunction}
via the $\ell_{p}$-norm is feasible to select subsets from the data
of varying qualities.

\section{Conclusion }

To deal with tremendous data of varying quality, we propose an accelerated
robust subset selection (ARSS) method. The $\ell_{p}$-norm is exploited
to enhance the robustness against both outlier samples and outlier
features. Although the resulted objective is complex to solve, we
propose a highly efficient solver via two techniques: ALM and equivalent
derivations. Via them, we greatly reduce the computational complexity
from $O\left(N^{4}\right)$ to $O\left(N{}^{2}L\right)$. Here feature
length $L$ is much smaller than data size $N$, i.e. $L\ll N$. Extensive
results on ten benchmark datasets verify that our method not only
runs 10,000+ times faster than the most related method, but also outperforms
state of the art methods. Moreover, we propose an accelerated solver
to highly speed up Nie's method, theoretically reducing the computational
complexity from $O\left(N^{4}\right)$ to $O\left(N{}^{2}L+NL{}^{3}\right)$.
Empirically, our accelerated solver could achieve equal results and
500+ times acceleration compared with the authorial solver.

\paragraph{Limitation. }

Our efficient algorithm build on the observation that the number of
samples is generally larger than feature length, i.e. $N\!>\! L$.
For the case of $N\!\leq\! L$, the acceleration will be inapparent.

\section*{Acknowledgements}

The authors would like to thank the editor and the reviewers for their
valuable suggestions. Besides, this work is supported by the projects
(Grant No. 61272331, 91338202, 61305049 and 61203277) of the National
Natural Science Foundation of China.

\bibliographystyle{7F__important_doingWork_myWorks_AAAI2015_aaai}
\phantomsection\addcontentsline{toc}{section}{\refname}\bibliography{5F__important_doingWork_referBib_forTIP,/home/zfy/important/doingWork/referBib_forTIP,6F__important_doingWork_referenceBib}

\begin{thebibliography}{}

\bibitem[\protect\citeauthoryear{Bien and
  Tibshirani}{2011}]{Bien_2012_AAS_prototypeSelection_4_Interpret}
Bien, J., and Tibshirani, R.
\newblock 2011.
\newblock Prototype selection for interpretable classification.
\newblock {\em Annals of Applied Statistics} 5(4):2403–2424.

\bibitem[\protect\citeauthoryear{Boutsidis, Mahoney, and
  Drineas}{2009}]{Boutsidis_2009_SODA_RRQR_2}
Boutsidis, C.; Mahoney, M.~W.; and Drineas, P.
\newblock 2009.
\newblock An improved approximation algorithm for the column subset selection
  problem.
\newblock In {\em SODA},  968--977.

\bibitem[\protect\citeauthoryear{Chan}{1987}]{Chan_1987_LAIA_RRQR}
Chan, T.~F.
\newblock 1987.
\newblock Rank revealing \{QR\} factorizations.
\newblock {\em Linear Algebra and its Applications} 88–89(0):67 -- 82.

\bibitem[\protect\citeauthoryear{Elhamifar \bgroup et al\mbox.\egroup
  }{2012a}]{Elhamifar_2012_CVPR_SeeAllLookFew}
Elhamifar, E.; Sapiro, G.; Vidal, R.; and Vidal, R.
\newblock 2012a.
\newblock See all by looking at a few: Sparse modeling for finding
  representative objects.
\newblock In {\em IEEE CVPR},  1600--1607.

\bibitem[\protect\citeauthoryear{Elhamifar \bgroup et al\mbox.\egroup
  }{2012b}]{Elhamifar_2012_NIPS_ExemplarsbyPairwise}
Elhamifar, E.; Sapiro, G.; Vidal, R.; and Vidal, R.
\newblock 2012b.
\newblock Finding exemplars from pairwise dissimilarities via simultaneous
  sparse recovery.
\newblock In {\em NIPS},  19--27.

\bibitem[\protect\citeauthoryear{Frey and
  Dueck}{2007}]{Frey_2007_Science_ClusteringViaPassingMessages}
Frey, B.~J., and Dueck, D.
\newblock 2007.
\newblock Clustering by passing messages between data points.
\newblock {\em Science} 315(5814):972--976.

\bibitem[\protect\citeauthoryear{Garcia \bgroup et al\mbox.\egroup
  }{2012}]{Garcia_2012_PAMI_PrototypeSelection_4_KNN}
Garcia, S.; Derrac, J.; Cano, J.~R.; and Herrera, F.
\newblock 2012.
\newblock Prototype selection for nearest neighbor classification: Taxonomy and
  empirical study.
\newblock {\em IEEE Trans. Pattern Anal. Mach. Intell.} 34(3):417--435.

\bibitem[\protect\citeauthoryear{Jenatton, Audibert, and
  Bach}{2011}]{Jenatton_2011_JMLR_SelectionViaSparsityInducingNorms}
Jenatton, R.; Audibert, J.-Y.; and Bach, F.
\newblock 2011.
\newblock Structured variable selection with sparsity-inducing norms.
\newblock {\em Journal of Machine Learning Research (JMLR)} 12:2777--2824.

\bibitem[\protect\citeauthoryear{Levin \bgroup et al\mbox.\egroup
  }{2008}]{ALevin_2008_PAMI_closedFormd}
Levin, A.; Lischinski, D.; Weiss, Y.; and Weiss, Y.
\newblock 2008.
\newblock A closed-form solution to natural image matting.
\newblock {\em IEEE Trans. Pattern Anal. Mach. Intell.} 30(2):228--242.

\bibitem[\protect\citeauthoryear{Li}{2011}]{cbLi_2011_PhDthesis_CSfor3D_data}
Li, C.
\newblock 2011.
\newblock {\em Compressive Sensing for 3D Data Processing Tasks: Applications,
  Models and Algorithms}.
\newblock Ph.D. Dissertation, Houston, TX, USA.
\newblock AAI3524544.

\bibitem[\protect\citeauthoryear{Liu \bgroup et al\mbox.\egroup
  }{2010}]{zcLin_2010_ICML_RobstSubspaceSegmentation}
Liu, G.; Lin, Z.; Yu, Y.; and Yu, Y.
\newblock 2010.
\newblock Robust subspace segmentation by low-rank representation.
\newblock In {\em ICML},  663--670.

\bibitem[\protect\citeauthoryear{Meng \bgroup et al\mbox.\egroup
  }{2013}]{gaofengMeng_2013_ICCV_dehazing}
Meng, G.; Wang, Y.; Duan, J.; Xiang, S.; and Pan, C.
\newblock 2013.
\newblock Efficient image dehazing with boundary constraint and contextual
  regularization.
\newblock In {\em ICCV},  617--624.

\bibitem[\protect\citeauthoryear{Nie \bgroup et al\mbox.\egroup
  }{2012}]{fpNie_2012_ICDM_robustMatrixCompletion}
Nie, F.; Wang, H.; Cai, X.; Huang, H.; and Ding, C.
\newblock 2012.
\newblock Robust matrix completion via joint schatten p-norm and lp-norm
  minimization.
\newblock In {\em IEEE ICDM},  566--574.

\bibitem[\protect\citeauthoryear{Nie \bgroup et al\mbox.\egroup
  }{2013}]{FpNie_2013_IJCAI_EarlyActiveLearning}
Nie, F.; Wang, H.; Huang, H.; and Ding, C. H.~Q.
\newblock 2013.
\newblock Early active learning via robust representation and structured
  sparsity.
\newblock In {\em IJCAI},  1572--1578.

\bibitem[\protect\citeauthoryear{Nie \bgroup et al\mbox.\egroup
  }{2014}]{fpNie_2014_ICML_newSolverForSvm}
Nie, F.; Huang, Y.; Wang, X.; and Huang, H.
\newblock 2014.
\newblock New primal svm solver with linear computational cost for big data
  classifications.
\newblock In {\em ICML}.

\bibitem[\protect\citeauthoryear{Nocedal and
  Wright}{2006}]{Nocedal_2006_book_numericalOptimization}
Nocedal, J., and Wright, S.~J.
\newblock 2006.
\newblock {\em Numerical Optimization}.
\newblock New York: Springer, 2nd edition.

\bibitem[\protect\citeauthoryear{Wang \bgroup et al\mbox.\egroup
  }{2014}]{huaWang_2014_ICML_RobustMetricLearning}
Wang, H.; Nie, F.; Huang, H.; and Huang, H.
\newblock 2014.
\newblock Robust distance metric learning via simultaneous l1-norm minimization
  and maximization.
\newblock In {\em ICML},  1836--1844.

\bibitem[\protect\citeauthoryear{Wang, Kumar, and
  Chang}{2012}]{junWang_2012_PAMI_SemiSupHash}
Wang, J.; Kumar, S.; and Chang, S.-F.
\newblock 2012.
\newblock Semi-supervised hashing for large-scale search.
\newblock {\em IEEE Trans. Pattern Anal. Mach. Intell.} 34.

\bibitem[\protect\citeauthoryear{Wright \bgroup et al\mbox.\egroup
  }{2009}]{JohnWright_2009_NIPS_Rpca}
Wright, J.; Ganesh, A.; Rao, S.; Peng, Y.; and Ma, Y.
\newblock 2009.
\newblock Robust principal component analysis: Exact recovery of corrupted
  low-rank matrices via convex optimization.
\newblock In {\em NIPS},  2080--2088.

\bibitem[\protect\citeauthoryear{Yu \bgroup et al\mbox.\egroup
  }{2008}]{kaiYu_2008_SIGIR_NonGreedy_TED}
Yu, K.; Zhu, S.; Xu, W.; ; and Gong, Y.
\newblock 2008.
\newblock Non-greedy active learning for text categorization using convex
  transductive experimental design.
\newblock In {\em SIGIR},  1081--1088.

\bibitem[\protect\citeauthoryear{Yu, Bi, and
  Tresp}{2006}]{kaiYu_2006_ICML_activeLearning}
Yu, K.; Bi, J.; and Tresp, V.
\newblock 2006.
\newblock Active learning via transductive experimental design.
\newblock In {\em ICML},  1081--1088.

\bibitem[\protect\citeauthoryear{Zhu \bgroup et al\mbox.\egroup
  }{2014}]{fyzhu_2014_JSTSP_RRLbS}
Zhu, F.; Wang, Y.; Fan, B.; Meng, G.; and Pan, C.
\newblock 2014.
\newblock Effective spectral unmixing via robust representation and
  learning-based sparsity.
\newblock {\em arXiv} :1409.0685.

\bibitem[\protect\citeauthoryear{Zuo \bgroup et al\mbox.\egroup
  }{2013}]{leiZhang_2013_ICCV_LpShrinkage}
Zuo, W.; Meng, D.; Zhang, L.; Feng, X.; and Zhang, D.
\newblock 2013.
\newblock A generalized iterated shrinkage algorithm for non-convex sparse
  coding.
\newblock In {\em IEEE ICCV},  217--224.

\end{thebibliography}

\end{document}